\documentclass{article} %
\usepackage[preprint]{colm2026_conference}

\usepackage{microtype}
\usepackage{hyperref}
\usepackage{url}
\usepackage{booktabs}
\usepackage{amsmath}
\usepackage{cancel}
\usepackage{amssymb}
\usepackage{mathtools}
\usepackage{amsthm}
\usepackage[normalem]{ulem}
\usepackage{bbm}
\usepackage{siunitx}
\usepackage{algorithm}
\usepackage{algpseudocode}
\usepackage{xspace}
\usepackage{mleftright}
\usepackage{tikz}
\usetikzlibrary{arrows.meta,positioning,fit,backgrounds,calc,decorations.pathreplacing}
\usepackage{pgfplots}
\pgfplotsset{compat=1.18}

\usepackage{amsfonts}
\usepackage[dvipsnames]{xcolor}
\usepackage{longtable}
\usepackage{bm}
\usepackage{enumitem}
\usepackage[font=small,labelfont=bf]{caption}
\usepackage{thm-restate}

\newcommand{\mymacro}[1]{#1}

\newcommand{\mcol}[2]{\ensuremath{\mathcolor{#1}{#2}}}
\definecolor{CharacterColor}{RGB}{98,115,19}

\colorlet{charC}{CharacterColor}
\colorlet{unitC}{PineGreen}
\colorlet{potC}{BrickRed}
\colorlet{propC}{ForestGreen}
\colorlet{wgtC}{Orange}
\colorlet{setC}{TealBlue}
\definecolor{TokenColor}{RGB}{140,10,89}

\newcommand{\eos}{{\textsc{eos}}}

\newcommand{\defn}[1]{\textbf{#1}}

\newcommand{\defeq}{\mathrel{\overset{\raisebox{-0.25ex}{\textnormal{\tiny def}}}{=}}}

\newcommand{\indicator}[1]{\mathbbm{1}\{ #1 \}}

\newcommand{\Weight}{\mcol{wgtC}{w}}

\newcommand{\E}{\mathop{\mathbb{E}}}
\newcommand{\ESS}[0]{\mathrm{ESS}}

\newcommand{\dist}{\mcol{propC}{q}}

\newcommand{\iid}[0]{\mathrel{\overset{\text{{{\scriptsize i.i.d.}}}}{\sim}}}           %

\newcommand{\Var}{\mathbb{V}}               %

\newcommand{\srcAlphabet}[0]{{\color{TokenColor}\ensuremath{\mathcal{X}}}\xspace}

\newcommand{\srcStrings}[0]{\ensuremath{\srcAlphabet^*}\xspace}

\newcommand{\psup}[1]{^{(#1)}}

\newcommand{\p}{\mymacro{p}}

\newcommand{\phat}{\ensuremath{\mcol{wgtC}{\widehat{p}_N}}}
\newcommand{\pbar}{\ensuremath{\mcol{wgtC}{\overline{p}_N}}}
\newcommand{\ptilde}{\ensuremath{\mcol{wgtC}{\widetilde{p}_N}}}
\newcommand{\draftStr}{{\color{TokenColor}\boldsymbol{d}}}
\newcommand{\draftSym}{{\color{TokenColor}d}}
\newcommand{\srcSymy}{{\color{TokenColor}y}}
\newcommand{\TVD}{\mathrm{TV}}
\newcommand{\BiasOne}{\mathrm{Bias}_{L^1}}
\newcommand{\BiasTwo}{\mathrm{Bias}_{L^2}}
\newcommand{\MSE}{\mathrm{MSE}}

\newcommand{\Rem}{R_N}
\newcommand{\VarTwo}{\mathrm{Var}_{L^2}}

\newcommand{\prefixLanguageOp}[1]{\overrightarrow{#1}}

\newcommand{\prefixp}{{\mymacro{\prefixLanguageOp{\p}}}}
\newcommand{\prefixq}{\prefixLanguageOp{\dist}}

\newcommand{\Kacc}{K_{\mathrm{acc}}}                      %
\newcommand{\accp}{\alpha}                           %

\newcommand{\RV}{{\color{TokenColor}\boldsymbol{X}}}
\newcommand{\srcSym}{{\color{TokenColor}x}}

\newcommand{\srcStr}
{\ensuremath{{\color{TokenColor}\boldsymbol{x}}}\xspace}

\NewDocumentCommand{\Prob}{d() m}{%
  \ensuremath{%
    \operatorname{Pr}\IfValueT{#1}{_{#1}}\!\left[#2\right]%
  }%
}
\usepackage[capitalize]{cleveref}
\crefname{section}{\S}{\S\S}
\Crefname{section}{\S}{\S\S}

\theoremstyle{plain}
\newtheorem{theorem}{Theorem}[section]
\newtheorem{proposition}[theorem]{Proposition}
\newtheorem{lemma}[theorem]{Lemma}
\newtheorem{corollary}[theorem]{Corollary}
\theoremstyle{definition}

\theoremstyle{remark}

\theoremstyle{definition}

\usepackage[disable,textsize=scriptsize]{todonotes}
\makeatletter
\newcommand*\iftodonotes{\if@todonotes@disabled\expandafter\@secondoftwo\else\expandafter\@firstoftwo\fi} 
\makeatother

\newcommand{\cc}[1]{}

\newcommand{\fix}[1]{}

\usepackage{wrapfig}
\usepackage{lineno}

\definecolor{darkblue}{rgb}{0, 0, 0.5}
\hypersetup{colorlinks=true, citecolor=darkblue, linkcolor=darkblue, urlcolor=darkblue}

\title{Faster LLM Inference via Sequential Monte Carlo}

\author{
Yahya Emara$^{*1,2}$ \quad Mauricio Barba da Costa$^{*3}$ \quad Chi-Chih Chang$^{1}$ \quad Cameron Freer$^{3}$\\
\bfseries Tim Vieira$^{4}$ \quad Ryan Cotterell$^{4}$ \quad Mohamed S. Abdelfattah$^{1,2}$\\
\textnormal{$^{1}$Cornell University \quad $^{2}$Makora \quad $^{3}$MIT \quad $^{4}$ETH Z\"{u}rich}
}

\begin{document}

\ifcolmsubmission
\linenumbers
\fi

\maketitle
\renewcommand{\thefootnote}{\fnsymbol{footnote}}
\footnotetext[1]{Co-first authorship, contact: \texttt{ye52@cornell.edu} \& \texttt{barba@mit.edu}}
\renewcommand{\thefootnote}{\arabic{footnote}}

\begin{abstract}    
Speculative decoding (SD) accelerates language model inference by drafting tokens from a cheap proposal model and verifying them against an expensive target model via rejection sampling.
Because rejection truncates the draft block at the first error, throughput degrades when draft and target diverge.
Rather than rejecting draft tokens outright, we propose to \emph{reweight} them.
To this end, we introduce sequential Monte Carlo speculative decoding (SMC-SD), which replaces token-level rejection with importance-weighted resampling over a population of draft particles. SMC-SD is a principled \emph{approximate} inference scheme that trades exactness for additional speed, while preserving theoretical bounds on its per-step approximation error.
Because LLM inference is memory bandwidth-bound, the arithmetic needed to draft particles and to score them in parallel comes nearly for free---SMC-SD uses idle compute to turn verification into a vectorized, fixed-size operation with no rollback.
Empirically, SMC-SD achieves $2.36\times$ speed-up over speculative decoding and a $5.2\times$ speed-up over autoregressive decoding, while remaining within 3\% of the target model's accuracy on reasoning, instruction-following, and coding benchmarks.\looseness=-1

\url{https://github.com/abdelfattah-lab/smcsd}
\end{abstract}

\section{Introduction}
\label{sec:introduction}
Autoregressive generation from neural language models is inherently sequential: each token depends on \emph{all} previous tokens, so generating a single sequence requires as many serial forward passes through the model as there are tokens in the output. This sequential bottleneck is the primary obstacle to faster inference.

\begin{wrapfigure}[19]{r}{0.5\linewidth}
    \centering
    \begin{tikzpicture}
    \definecolor{barAR}{RGB}{140,100,90}
    \definecolor{barSD}{RGB}{190,40,40}
    \definecolor{barSSD}{RGB}{50,110,190}
    \definecolor{barSMCSD}{RGB}{140,80,180}
    \begin{axis}[
        width=\linewidth,
        height=0.8\linewidth,
        ylabel={Throughput (tok/s)},
        ylabel style={font=\footnotesize},
        ymin=0, ymax=450,
        xmin=-0.7, xmax=5.7,
        ytick={0,100,200,300,400},
        xtick={0,1.6,3.2,4.8},
        xticklabels={SGLang\\AR, SGLang\\SD, SSD, {SMC-SD\\(\textbf{ours})}},
        xticklabel style={font=\footnotesize, align=center},
        yticklabel style={font=\footnotesize},
        ymajorgrids=true,
        grid style={gray!25},
        axis lines*=left,
        clip=false,
        xtick style={draw=none},
    ]
    \fill[barAR]    (axis cs:-0.35,0)  rectangle (axis cs:0.35,65);
    \fill[barSD]    (axis cs:1.25,0)   rectangle (axis cs:1.95,141.5);
    \fill[barSSD]   (axis cs:2.85,0)   rectangle (axis cs:3.55,225.2);
    \fill[barSMCSD] (axis cs:4.45,0)   rectangle (axis cs:5.15,335);

    \node[font=\scriptsize,            anchor=south] at (axis cs:0,   65)    {65.0};
    \node[font=\footnotesize\bfseries, anchor=south] at (axis cs:0,   100)   {1.0x};
    \node[font=\scriptsize,            anchor=south] at (axis cs:1.6, 141.5) {140.8};
    \node[font=\footnotesize\bfseries, anchor=south] at (axis cs:1.6, 176)   {2.2x};
    \node[font=\scriptsize,            anchor=south] at (axis cs:3.2, 225.2) {225.2};
    \node[font=\footnotesize\bfseries, anchor=south] at (axis cs:3.2, 260)   {3.5x};
    \node[font=\scriptsize,            anchor=south] at (axis cs:4.8, 342)   {342.0};
    \node[font=\footnotesize\bfseries, anchor=south] at (axis cs:4.8, 370)   {5.2x};
    \end{axis}
    \end{tikzpicture}
    \caption{Speed-up of SMC-SD on Llama 1B$\,\to\,$70B draft-target pair relative to autoregressive baseline, optimized tree-based SD (SGLang), Speculative Speculative Decoding \citep[SSD;][]{kumar2026speculativespeculativedecoding} on ShareGPT dataset. AR, SGLang SD, SMC-SD run on 4 H100 GPUs, while SSD runs on 5 H100 GPUs.
    }
    \label{fig:main_throughput}
\end{wrapfigure}
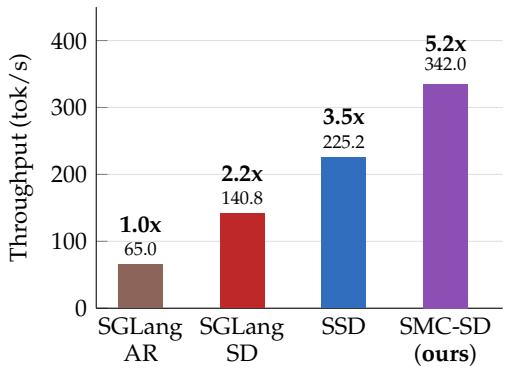

Speculative decoding \citep[SD;][]{leviathan2023fast} addresses this bottleneck by amortizing the cost of target-model calls.
At its core, SD is a rejection sampler \citep{chen2023accelerating}:
a small \emph{draft} model---chosen to be significantly cheaper to evaluate---proposes $K$-tokens, and the larger \emph{target} model verifies all $K$ tokens in a single forward pass, accepting a prefix and rejecting the rest.
This rejection criterion ensures that the resulting strings are distributed \emph{exactly} according to the target, while the speed-up factor is stochastic, depending on the alignment of the draft and target models.

This paper takes a fundamentally different approach to accelerating autoregressive generation. 
Rather than searching for a faster exact sampler, we devise an \emph{approximate} sampling scheme whose fidelity to the target can be traded off against speed.
The key modification is replacing SD's token-level rejection step with importance-weighted resampling, making the verification step an instance of sequential Monte Carlo \citep[SMC;][]{doucet2001sequential, del2006sequential, naesseth2024elementssequentialmontecarlo}. Our method \emph{reverses} the guarantee of SD---approximation quality is stochastic and the speed-up factor is deterministic. We call our method \textbf{sequential Monte Carlo speculative decoding (SMC-SD)} and describe it concretely in \Cref{sec:smc-sd}.

We derive a closed-form expression for the tokens-per-second (TPS) speed-up factor of SMC-SD with the roofline model \citep{williams2009roofline} of the GPU in \Cref{sec:speedup-analysis}. We show that the roofline model gives a simple characterization of the speed-up factor in both the memory bandwidth-bound and compute-bound regimes.

From a theory perspective, because SMC-SD is ultimately based on importance resampling, non-asymptotic bounds on its approximation error can be obtained using known techniques.
In the case of SMC-SD, we show that both the $L_2$ bias and mean squared error decay linearly in the number of particles and the $L_1$ bias decays in the root of the number of particles with constants governed by the $\chi^2$-divergence between draft and target model (\Cref{sec:bg-importance-resampling}).

From a systems perspective, SMC-SD presents a novel computation pattern that necessitates designing an inference engine around it.
In \Cref{sec:smc-sd-hardware}, we use several observations at the intersection of hardware and algorithm to guide the design of our high-throughput SMC-SD inference engine.
Both the drafting and verifying stages of SMC-SD use a vectorizable execution structure that map naturally onto GPU hardware and enable greater parallelism than standard speculative decoding. For KV cache management, we observe that all KV cache data movement during the resampling step can be replaced with efficient in-place pointer exchanges, and we reduce the size of the cache by leveraging the shared prefix among particles.

Empirically, SMC-SD achieves up to $2.5\times$ the throughput of an optimized implementation of speculative decoding on GSM8K \citep{cobbe2021trainingverifierssolvemath}, MATH500 \citep{lightman2023letsverifystepstep}, AlpacaEval \citep{dubois2025lengthcontrolledalpacaevalsimpleway}, and DS1000 \citep{lai2022ds1000naturalreliablebenchmark} across both Llama \citep{grattafiori2024llama3herdmodels} and Qwen \citep{yang2025qwen3technicalreport} model families, while remaining within 3\% of the target model's accuracy.
In a multi-GPU setting, SMC-SD achieves a $2.36\times$ speed-up over state-of-the-art speculative decoding and $5.2\times$ over the autoregressive baseline.
\Cref{fig:main_throughput} summarizes these end-to-end throughput gains.\looseness=-1

\section{Language Models and Speculative Decoding}
\label{sec:background}

\subsection{Language Modeling Background}
\label{sec:bg-ar}

Let $\srcAlphabet$ be a finite \defn{alphabet}, a finite, non-empty set.
We call elements of $\srcAlphabet$ \defn{tokens}.
We write $\srcStrings$ for the set of all finite strings over $\srcAlphabet$, including the empty string $\varepsilon$, and denote individual tokens by $\srcSym$ and strings by $\srcStr = \srcSym_1 \cdots \srcSym_T \in \srcStrings$.
We use the shorthand $\srcStr_{<t} \defeq (\srcSym_1, \ldots, \srcSym_{t-1})$ to denote the prefix up to but not including position $t$.
A \defn{language model} is a probability distribution $\p$ over $\srcStrings$.
Given a language model $\p$ over $\srcStrings$, we define the \defn{prefix probability} of a string $\srcStr$ as the total probability of all strings that begin with $\srcStr$ as follows $\prefixp(\srcStr) \defeq \sum_{\srcStr' \in \srcStrings} \p(\srcStr\srcStr')$.
 The ratio of consecutive prefix probabilities recovers a conditional distribution over the next token $\srcSym \in \srcAlphabet \cup \{\eos\}$ where $\eos \not \in \srcAlphabet$ is a distinguished symbol:

\begin{subequations}\label{eq:prefix-conditionals}
\noindent
\begin{minipage}{0.48\linewidth}
\begin{equation}
    \prefixp(\srcSymy \mid \srcStr) = \frac{\prefixp(\srcStr\, \srcSymy)}{\prefixp(\srcStr)} \label{eq:prefix-token}
\end{equation}
\end{minipage}\hfill
\begin{minipage}{0.48\linewidth}
\begin{equation}
    \prefixp(\eos \mid \srcStr) = \frac{\p(\srcStr)}{\prefixp(\srcStr)} \label{eq:prefix-eos}
\end{equation}
\end{minipage}
\end{subequations}
 Every language model admits an \defn{autoregressive factorization}:
$\p(\srcStr) = \prefixp(\eos \mid \srcStr)\prod_{t=1}^{T} \prefixp(\srcSym_t \mid \srcStr_{<t})$.
We call a language model that is autoregressively factorized an \defn{autoregressive language model}.
Finally, we say that a language model $\p$ is \defn{absolutely continuous} with respect to a second language model $\dist$, written $\p \ll \dist$, if $\dist(\srcStr) = 0 \implies \p(\srcStr) = 0, \forall \srcStr$.\looseness=-1

\subsection{Speculative Decoding}
\label{sec:bg-speculative}

Speculative decoding \citep{leviathan2023fast, chen2023accelerating} accelerates autoregressive inference by applying \defn{rejection sampling} at the token level.
We consider a \defn{target model} $\p$ and a smaller, faster-to-evaluate \defn{draft model} $\dist$ over the same alphabet $\srcAlphabet$.
Given a draft length $K$ and the draft--target pair $(\dist, \p)$, each round proceeds in two phases.
In the \textbf{draft} phase, the draft model generates a continuation $\draftStr$ of up to $K$ tokens autoregressively from $\prefixq$.
In the \textbf{verification} phase, the target model scores all $K$ draft tokens in a single forward pass.
Each token $\draftSym_j$ is accepted with probability $\accp_j = \min\bigl(1,\, \prefixp(\draftSym_j \mid \srcStr\,\draftStr_{<j}) / \prefixq(\draftSym_j \mid \srcStr\,\draftStr_{<j})\bigr)$.
On the first rejection, the token is resampled from the residual distribution and all subsequent draft tokens are discarded; 
if every token is accepted, a bonus token is drawn from $\prefixp(\cdot \mid \srcStr\,\draftStr)$.
In either case, every round produces at least one token whose distribution involves the target model.
This procedure guarantees that the output is distributed \emph{exactly} according to $\p$ \citep{leviathan2023fast}; the full pseudocode is given in \Cref{app:spec-decoding}.
The effective speed-up depends on the \emph{acceptance rate}: the closer $\dist$ is to $\p$, the more tokens survive per round.

\section{Sequential Monte Carlo Speculative Decoding}
\label{sec:smc-sd}

\begin{figure}
    \centering
    \includegraphics[width=0.9\linewidth, trim=0 3cm 0 3cm]{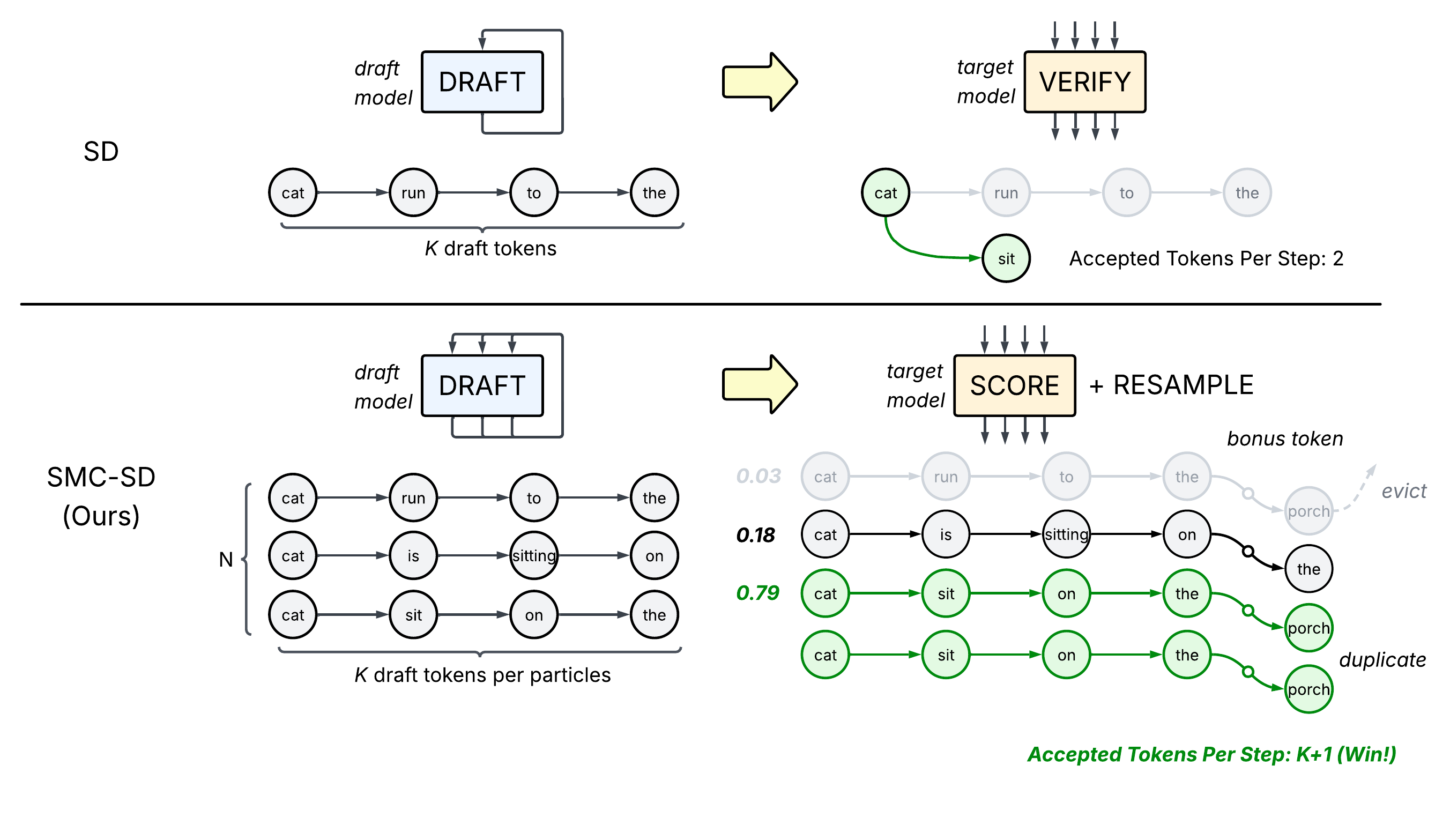}
    \caption{Our approach (bottom) compared to standard speculative decoding (top). In standard speculative decoding, a draft model generates a single sequence of draft tokens. A target model then performs a verification step, accepting a valid prefix of verified tokens while discarding rejected tokens. SMC-SD maintains a set of $N$ candidate sequences (particles). In each iteration, the draft model extends the $N$ sequences by $K$ draft tokens, where $ K$ equals 4 in the illustration above. The target model then scores these extensions and samples a bonus token. In the Resample phase, sequences with low score (e.g., $.03$) are evicted, while sequences with high probabilities (e.g., $.79$) are duplicated to form the new set of $N$ sequences.
    }
        \label{fig:method-comparison}
\end{figure}

Sequential Monte Carlo \citep[SMC;][]{doucet2001sequential, del2006sequential, naesseth2024elementssequentialmontecarlo} runs a population of $N$ particles $\{\srcStr\psup{n}\}_{n=1}^{N}$; each particle is a string that we intend to keep extending. Each particle alternates between three operations at each step $t$:
\begin{enumerate}[leftmargin=2em]
\item \textbf{Extend:} sample $\srcSym_t\psup{n} \sim \prefixq(\cdot \mid \srcStr\psup{n})$ for each particle, appending the token to $\srcStr\psup{n}$.
\item \textbf{Reweight:} update each weight by $\Weight_t\psup{n} \gets \Weight_{t-1}\psup{n} \cdot \prefixp(\srcSym_t\psup{n} \mid \srcStr\psup{n}_{<t}) / \prefixq(\srcSym_t\psup{n} \mid \srcStr\psup{n}_{<t})$.
\item \textbf{Resample:} if the effective sample size (ESS) falls below a threshold, draw new ancestor indices $a_1, \ldots, a_N$ independently from a categorical distribution over $\{1, \ldots, N\}$ with probabilities proportional to the normalized weights $\overline{\Weight}\psup{1}, \ldots, \overline{\Weight}\psup{N}$, set $\srcStr\psup{n} \gets \srcStr\psup{a_n}$ for all $n$, and reset all weights to $1/N$.
\end{enumerate}

\begin{algorithm}[t]
\caption{Sequential Monte Carlo Speculative Decoding (One Round)}\label{alg:smc-sd}
\label{alg:smc}
\begin{algorithmic}[1]
\Require Particles $\{\srcStr\psup{n}\}_{n=1}^{N}$, draft model $\dist$, target model $\p$, draft length $K$, ESS threshold $\eta \in [0,N]$
\Ensure Updated particles $\{(\srcStr\psup{n}, \Weight\psup{n})\}_{n=1}^{N}$, each extended by $K{+}1$ tokens ($K$ draft $+$ 1 bonus)
\Statex \textbf{--- Draft ---}
\For{$n = 1, \ldots, N$} \Comment{sample $N$ draft extensions in parallel}
\State $\draftStr\psup{n} \gets \varepsilon$
\For{$t = 1, \ldots, K$}
    \State $\srcSym_t \sim \prefixq(\cdot \mid \srcStr\psup{n}\, \draftStr\psup{n})$
    \State $\draftStr\psup{n} \gets \draftStr\psup{n} \srcSym_t$
\EndFor
\EndFor
\Statex \textbf{--- Score and bonus token ---}
\For{$n = 1, \ldots, N$} \Comment{single batched target forward pass}
    \State Compute $\prefixp(\draftSym^{(n)}_j \mid \srcStr\psup{n}\,\draftStr^{(n)}_{<j})$ for all $j \in \{1, \ldots, K\}$
    \State $\srcSym^+_n \sim \prefixp(\cdot \mid \srcStr\psup{n}\, \draftStr\psup{n})$ \Comment{bonus token}
\EndFor
\Statex \textbf{--- Reweight ---}
\For{$n = 1, \ldots, N$}
    \State $\Weight\psup{n} \gets \Weight\psup{n} \cdot \displaystyle\prod_{j=1}^{K} \frac{\prefixp(\draftSym^{(n)}_j \mid \srcStr\psup{n}\,\draftStr^{(n)}_{<j})}{\prefixq(\draftSym^{(n)}_j \mid \srcStr\psup{n}\, \draftStr^{(n)}_{<j})}$ \Comment{incremental block weight}
\EndFor
\State $\overline{\Weight}\psup{n} \gets \Weight\psup{n} / \sum_{m=1}^{N} \Weight\psup{m}$ for all $n$ \Comment{normalize}
\State $\ESS \gets \frac{\left(\sum_{n=1}^{N} \Weight\psup{n}\right)^2}{\sum_{n=1}^{N} (\Weight\psup{n})^2}$ \Comment{effective sample size}
\Statex \textbf{--- Resample ---}
\If{$\ESS < \eta$}
    \For{$n = 1, \ldots, N$}
        \State Sample ancestor $a_n \sim \mathrm{Cat}(\overline{\Weight}\psup{1}, \ldots, \overline{\Weight}\psup{N})$ \Comment{$\Pr[a_n = m] = \overline{\Weight}\psup{m}$}
    \EndFor
    \State $\srcStr\psup{n} \gets \srcStr\psup{a_n}\, \draftStr\psup{a_n}\, \srcSym^+_{a_n}$ for all $n \in [N]$ \Comment{duplicate high-weight particles}
    \State $\Weight\psup{n} \gets 1/N$ for all $n \in [N]$ \Comment{reset weights after resampling}
\Else
    \State $\srcStr\psup{n} \gets \srcStr\psup{n}\,\draftStr\psup{n}\,\srcSym^+_n$ for all $n \in [N]$ \Comment{keep existing ancestry}
\EndIf
\State \Return $\{(\srcStr\psup{n}, \Weight\psup{n})\}_{n=1}^{N}$ \Comment{$N$ updated particles, each extended by $K{+}1$ tokens}
\end{algorithmic}
\end{algorithm}

Resampling concentrates computation on promising prefixes and prevents weight degeneracy from rendering the particle population uninformative. The \defn{effective sample size} (ESS) \citep{huggins2014information} quantifies how many particles contribute meaningfully:
\begin{equation}\label{eq:ess}
    \ESS \;=\; \frac{\bigl(\sum_{n=1}^{N} \Weight\psup{n}\bigr)^{2}}{\sum_{n=1}^{N} \bigl(\Weight\psup{n}\bigr)^{2}}.
\end{equation}
The ESS satisfies $1 \leq \ESS \leq N$: when $\ESS \approx N$, all particles contribute meaningfully; when $\ESS$ collapses toward $1$ (\defn{weight degeneracy}), a single particle dominates. The \emph{resample} step in SMC is triggered when $\ESS$ falls below a threshold, pruning low-weight particles and duplicating high-weight ones to maintain sample diversity.

\paragraph{The SMC-SD Algorithm.}
\label{sec:smc-sd-algorithm}

We instantiate the SMC framework for speculative decoding in \Cref{alg:smc-sd}: the target is $\p$, the proposal is a cheaper draft model $\dist$, and one \defn{round} extends every particle by a draft of up to $K$ tokens.
At the start of each round, the particle population represents a set of partially generated strings $\srcStr\psup{1}, \ldots, \srcStr\psup{N}$; particles may share ancestry through earlier resampling steps, but they need not share the same current prefix.
For a weighted particle population $\{(\srcStr\psup{n}, \Weight\psup{n})\}_{n=1}^{N}$, draft length $K$, and ESS threshold $\eta \in [0, N]$, a single round proceeds as in \Cref{alg:smc-sd}. Unlike SD, no token-level accept/reject chain truncates a draft midway; every particle is extended by exactly $K$ tokens per round (\Cref{fig:method-comparison}).

Each particle independently drafts $K$ sequential tokens from its own current prefix under $\dist$, then the target model $\p$ scores all $N$ sequences in a single batched forward pass and generates a bonus token.
The resulting block weights update the particle population, and a low-ESS resampling step duplicates high-weight trajectories while pruning low-weight ones.
The round updates the internal particle population block by block; after the final round, one complete sequence is sampled from the terminal normalized weights and returned as the model output.
That is, SMC-SD maintains latent particle trajectories during generation and samples the returned sequence only at termination.

\subsection{Speed-up Analysis}
\label{sec:speedup-analysis}

We analyze SMC-SD TPS under the roofline model \citep{williams2009roofline}, which classifies a workload according to its \emph{arithmetic intensity}, or the floating-point operations (FLOPs) used per byte of memory loaded for the workload. Whether a workload is memory bandwidth-bound or compute-bound depends on whether its arithmetic intensity is greater than or less than the \emph{ridge point} of the hardware, or its peak FLOPs per memory bandwidth.
Let $B$ denote the number of independent user requests served concurrently (the \emph{batch size}), and let the \emph{effective batch size} denote $BN$, the total number of particles across all requests.
The target model's forward pass processes $BN(K{+}1)$ tokens per round: $N$ particles per request, each extended by $K$ draft tokens plus one bonus token.
Let $c = T_{\dist}/T_{\p} \ll 1$ be the draft-to-target decoding-time ratio; under the roofline model, $c \approx \rho \defeq \mathcal{P}_{\dist}/\mathcal{P}_{\p}$ (assuming the draft and target have the same quantization), since memory-bound latency is proportional to parameter count.
Let $R$ denote the effective batch size (in tokens) at which a forward pass reaches the ridge point; this depends only on the hardware and quantization format.
Below, we characterize the speed-up factor depending on whether the verification pass is memory-bound or compute-bound (proofs in \Cref{lem:speedup-mem,lem:speedup-comp}).

\emph{Memory-bound regime.}
When $BN(K{+}1) \leq R$, the target scores all $BN(K{+}1)$ tokens in a single weight-load and the speed-up over autoregressive decoding is
\begin{equation}\label{eq:speedup}
    S^{(\text{mem})} = \frac{K+1}{\rho K + 1},
\end{equation}
independent of $BN$---particles are ``free.''

\emph{Compute-bound regime.}
Once $BN(K{+}1)$ exceeds $R$, the target pass becomes compute-bound.
Writing $\hat{\rho}(BN) = \max(\rho, \rho BN/R)$ for the effective draft cost ratio, the speed-up falls to
\begin{equation}\label{eq:speedup-cb}
    S^{(\text{comp})} = \frac{K+1}{\hat{\rho}(BN)\,K + BN(K{+}1)/R},
\end{equation}
which degrades linearly in the effective batch size after the ridge point has been reached.

\Cref{fig:speedup-vs-K-N} shows the theoretical speed-up for multiple values of $N$ and $K$. The figure uses a Llama-1B$\,\to\,$8B pair. The speed-up factor increases with respect to $K$ up until the ridge point is reached; at which point, the speed-up becomes approximately constant. The speed-up factor is constant with respect to $N$  in the memory-bandwidth bound regime, but degrades linearly once the workload becomes compute-bound.

\begin{figure}[t]
\centering
\includegraphics[width=0.92\linewidth]{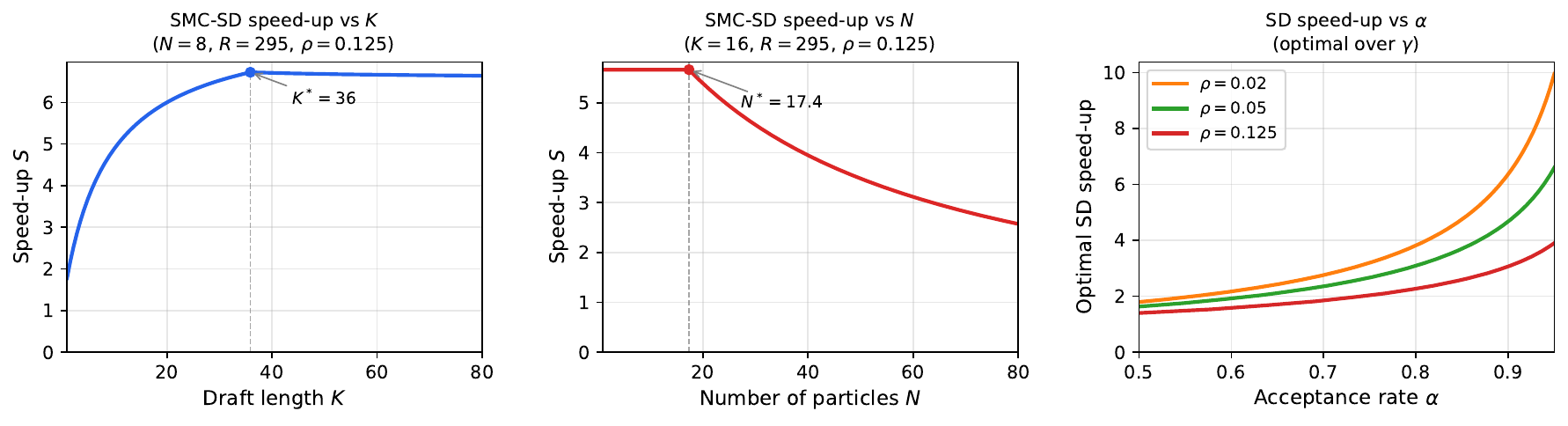}
    \caption{\textbf{Left, middle:} Theoretical speed-up of SMC-SD over autoregressive decoding for a Llama-1B$\,\to\,$8B pair as a function of draft length $K$ (left) and number of particles $N$ (middle), with $\rho = 1/8$, $B = 1$, and $R = 295$. Dashed lines mark the ridge point where the target forward pass transitions from memory-bound to compute-bound. In the memory-bound regime (left of the ridge point), increasing $K$ increases the speed-up factor and adding particles is free; past the ridge point, speed-up is approximately constant as $K$ increases, and it degrades linearly as $N$ grows. \textbf{Right:} optimal speculative decoding speed-up from \citet{leviathan2023fast}, $(1-\alpha^{\gamma+1})/((1-\alpha)(\gamma c + 1))$, as a function of acceptance rate $\alpha$ after optimizing over $\gamma$ (i.e.\ $K$), shown for several draft-to-target cost ratios $\rho$.
    }
\label{fig:speedup-vs-K-N}
\end{figure}

\subsection{Characterizing SMC-SD Approximation Error}
\label{sec:bg-importance-resampling}

Because SMC-SD is an approximate sampler, it is important to characterize its approximation quality. Fortunately, since SMC-SD is based on importance sampling, bounds on its divergence from a target distribution can be obtained using known techniques \citep{Agapiou2015ImportanceSI, del2006sequential}.
If $f$ is any evaluation metric bounded in $[0,1]$, then $|\E_{\phat}[f] - \E_{\p}[f]| \leq \|\phat - \p\|_1$: controlling the total variation distance between the SMC-SD output $\phat$ and the target $\p$ directly controls the gap in any bounded downstream metric.
We establish bounds on this distance as a function of the number of particles $N$.
We also bound the effective sample size as a function of the draft length $K$ in \ref{app:ess-proofs}, which informs the choice of operating parameters for a given compute budget.

At each round of \Cref{alg:smc-sd}, the weighted particle population defines an empirical distribution over generated sequences.
Let $\phat$ denote this \defn{importance resampling distribution}:
\begin{equation}\label{eq:phat-def}
  \phat(\srcStr) \defeq \sum_{n=1}^N \frac{\Weight\psup{n}}{\sum_{m=1}^N \Weight\psup{m}} \indicator{\srcStr\psup{n} = \srcStr}.
\end{equation}
As $N \to \infty$, since $\phat$ and $\p$ are discrete, $\phat$ converges to $\p$ in total variation distance almost surely \citep{del2006sequential}.
For finite $N$, however, $\phat$ is a \emph{random} distribution that depends on the particular draw of proposals and resampling decisions.

Let $\pbar \defeq \E[\phat]$ denote the expected importance resampling distribution, where the expectation is over all randomness in the $N$ particles.
We decompose the error of $\phat$ into \emph{bias} and \emph{variance}.
The \defn{bias} measures systematic error---how far the expected behavior $\pbar$ deviates from $\p$:

\begin{subequations}
\noindent
\begin{minipage}{0.48\linewidth}
\begin{equation}
    \BiasOne(\phat) \defeq \|\pbar - \p\|_1
\end{equation}
\end{minipage}\hfill
\begin{minipage}{0.48\linewidth}
\begin{equation}
    \BiasTwo^2(\phat) \defeq \|\pbar - \p\|_2^2
\end{equation}
\end{minipage}
\end{subequations}

\noindent The \defn{variance} captures how much a single run of $\phat$ fluctuates around its mean $\pbar$:
\begin{subequations}
\noindent
\begin{minipage}{0.48\linewidth}
\begin{equation}
    \mathrm{Var}_{L^1}(\phat) \defeq \E\bigl[\|\phat - \pbar\|_1\bigr]
\end{equation}
\end{minipage}\hfill
\begin{minipage}{0.48\linewidth}
\begin{equation}
    \VarTwo(\phat) \defeq \E\bigl[\|\phat - \pbar\|_2^2\bigr]
\end{equation}
\end{minipage}
\end{subequations}
The \defn{mean squared error} combines both sources of error and decomposes via the bias--variance identity:
\begin{equation}
  \MSE(\phat) \defeq \E \bigl[\|\phat - \p\|_2^2\bigr] = \underbrace{\E\bigl[\|\phat - \pbar\|_2^2\bigr]}_{\VarTwo} + \underbrace{\|\pbar - \p\|_2^2}_{\BiasTwo^2}.
\end{equation}

\begin{restatable}{theorem}{sirtvbound}
\label{prop:sir-tv}
Let $\srcStr\psup{1}, \ldots, \srcStr\psup{N} \iid \dist$ with $\p \ll \dist$ and $\E_{\dist}[W^4] < \infty$ where $W \defeq \p(\RV)/\dist(\RV)$. Let $\phat$ be the importance resampling distribution, $\pbar \defeq \E[\phat]$, and $\chi^2(\p \| \dist) \defeq \E_{\dist}[(W - 1)^2]$.
Then the following inequalities hold:
\begin{enumerate}[leftmargin=2em]
\item $\BiasOne(\phat) = O\left(\sqrt{(1 + \chi^2(\p \| \dist)) / N}\right)$.
\item $\BiasTwo^2(\phat) = \|\pbar - \p\|_2^2 = O\left((1 + \chi^2(\p \| \dist)) / N\right)$.
\item $\MSE(\phat) = \E\bigl[\|\phat - \p\|_2^2\bigr] = O\left((1 + \chi^2(\p \| \dist)) / N\right)$.
\end{enumerate}
\end{restatable}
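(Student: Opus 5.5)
We would prove all three items by viewing $\phat$ as a self-normalized importance sampling estimator, handled one coordinate at a time. Write $W_n \defeq W(\srcStr\psup{n})$ and $\bar W \defeq \frac1N\sum_{n=1}^N W_n$, so that for each fixed string $\srcStr$
\begin{equation*}
\phat(\srcStr) = \frac{A_N(\srcStr)}{\bar W}, \qquad A_N(\srcStr) \defeq \frac1N\sum_{n=1}^N W_n \indicator{\srcStr\psup{n}=\srcStr}.
\end{equation*}
Here $\E[A_N(\srcStr)] = \p(\srcStr)$ and $\E[\bar W]=1$.

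The central tool is the standard second-order expansion of a ratio around the means of numerator and denominator:
\begin{equation*}
\frac{A}{\bar W} = A - A(\bar W-1) + \frac{A(\bar W-1)^2}{\bar W}.
\end{equation*}
Taking expectations gives
\begin{equation*}
\pbar(\srcStr) - \p(\srcStr) = -\mathrm{Cov}\bigl(A_N(\srcStr),\bar W\bigr) + \E\Bigl[\phat(\srcStr)(\bar W-1)^2\Bigr].
\end{equation*}
\begin{itemize}[leftmargin=2em]
\item \emph{Covariance term.} By independence, $\mathrm{Cov}(A_N(\srcStr),\bar W) = \frac1N\bigl(\E_{\dist}[W^2\indicator{\RV=\srcStr}] - \p(\srcStr)\bigr)$.
\item \emph{Remainder term.} Since $0\le\phat(\srcStr)\le 1$ and $\sum_{\srcStr}\phat(\srcStr)=1$, summing over $\srcStr$ bounds the total remainder by $\E[(\bar W-1)^2] = \chi^2(\p\|\dist)/N$.
\end{itemize}

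For item 1, sum the absolute values over $\srcStr$. The covariance part contributes at most $\frac1N(\E_{\dist}[W^2]+1) = \frac{2+\chi^2}{N}$, and the remainder part at most $\chi^2/N$. This gives $\BiasOne = O((1+\chi^2)/N)$, which implies the stated $O(\sqrt{(1+\chi^2)/N})$ whenever $(1+\chi^2)/N\le 1$. In the opposite regime the claim is trivial because $\|\pbar-\p\|_1\le 2$. Item 2 follows at once, since $\|\cdot\|_2^2\le\|\cdot\|_1\cdot\|\cdot\|_\infty\le 2\|\cdot\|_1$.

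Item 3 is the main obstacle, because $\MSE$ requires controlling $\E[(\phat(\srcStr)-\p(\srcStr))^2]$ summed over an infinite set, and $\bar W$ can be small. We would use the identity
\begin{equation*}
\phat(\srcStr)-\p(\srcStr) = \bigl(A_N(\srcStr)-\p(\srcStr)\bar W\bigr) - \bigl(\phat(\srcStr)-\p(\srcStr)\bigr)(\bar W-1).
\end{equation*}
Applying $(a+b)^2\le 2a^2+2b^2$ splits the squared error into two pieces.
\begin{itemize}[leftmargin=2em]
\item \emph{Main piece.} The summands $W_n(\indicator{\srcStr\psup{n}=\srcStr}-\p(\srcStr))$ are i.i.d.\ and mean zero, so $\E[(A_N(\srcStr)-\p(\srcStr)\bar W)^2] = \frac1N\E_{\dist}[W^2(\indicator{\RV=\srcStr}-\p(\srcStr))^2]$. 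Summing over $\srcStr$ gives at most $\frac{2}{N}\E_{\dist}[W^2] = O((1+\chi^2)/N)$.
\item \emph{Second piece.} Use $\sum_{\srcStr}(\phat(\srcStr)-\p(\srcStr))^2 \le \|\phat-\p\|_1\cdot\|\phat-\p\|_\infty\le 2$. The sum of the second piece is then at most $4\E[(\bar W-1)^2] = 4\chi^2/N$.
\end{itemize}
This already yields item 3 without ever dividing by $\bar W$.

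The finite fourth moment $\E_{\dist}[W^4]<\infty$ is therefore only a safety margin. If a sharper $1/N$ constant in the bias were wanted, it would control higher-order remainders; otherwise it is not needed. The one genuinely delicate point to check is the interchange of summation over the countable set $\srcStrings$ with expectation. This is justified by Tonelli's theorem, since every summand bounded above is nonnegative.
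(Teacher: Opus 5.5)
Your proof is correct, and it takes a genuinely different route from the paper's.

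\textbf{The paper's route.} It uses a delta-method expansion. It writes $\phat(A)-\p(A)=Z_A/(1+V)$ with $V=\bar W-1$ and splits on two events.
\begin{itemize}
\item On $\{|V|<1/2\}$, $(1+V)^{-2}$ is expanded as a power series and the correction is controlled by Cauchy--Schwarz.
\item On the complement, Markov's inequality on $V^4$ is applied; this is where $\E_{\dist}[W^4]<\infty$ enters.
\end{itemize}
This yields $\E[(\phat(A)-\p(A))^2]\le(1+\chi^2)/N+O(N^{-3/2})$ uniformly over events $A$. Item~1 then follows from Jensen and $\mathrm{TV}=\sup_A|\pbar(A)-\p(A)|$, which is why only a square-root rate appears. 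Items~2 and~3 rely on a separate per-string expansion, with an exact $1/N$ leading term plus an $N^{-3/2}$ remainder summed over strings.

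\textbf{Your route.} You use two exact identities, $1/\bar W=1-(\bar W-1)+(\bar W-1)^2/\bar W$ and $\phat-\p=(A_N-\p\,\bar W)-(\phat-\p)(\bar W-1)$. You absorb the nonlinear remainders using only $\sum_{\srcStr}\phat(\srcStr)=1$ and $\|\phat-\p\|_2^2\le 2$, so no truncation, power series, or moments beyond the second are needed.

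\textbf{What your route buys.}
\begin{itemize}
\item Explicit non-asymptotic constants: $\BiasOne(\phat)\le 2(1+\chi^2)/N$ and $\MSE(\phat)\le 8(1+\chi^2)/N$.
\item Validity under $\chi^2<\infty$ alone.
\item A $1/N$ rate for the $L^1$ bias. This is the correct order for self-normalized importance sampling and strictly stronger than item~1 as stated.
\item It makes the paper's singleton remainder estimate unnecessary; that estimate is only sketched there.
\item Robustness on $\{\bar W=0\}$. Once the remainder is written as $\phat(\srcStr)(\bar W-1)^2$, both of your identities follow from $A_N=\phat\,\bar W$ alone. So they hold on that event under any convention for $\phat$. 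The paper instead asserts $\bar W>0$ almost surely, which $\p\ll\dist$ does not guarantee.
\end{itemize}

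\textbf{What the paper's route buys.} It gives sharp leading constants. The per-event MSE bound has leading term exactly $(1+\chi^2)/N$, and each string's MSE has the exact asymptotic self-normalized variance as its $1/N$ term. Your $(a+b)^2\le 2a^2+2b^2$ split gives away constant factors.
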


\Cref{prop:sir-tv} bounds the error of importance resampling (proof and supporting lemmata in \Cref{app:proofs}), the primitive that each round of SMC-SD applies to its particle population. For a single round, both the $L^2$ bias and mean squared error decay as $O(1/N)$ in the number of particles, and the $L^1$ bias decays as $O(1/\sqrt{N})$, with constants governed by the $\chi^2$-divergence between draft and target. These per-round bounds give practitioners a concrete lever---increasing $N$ tightens the approximation at each step---but they do not directly characterize the error of the full multi-round SMC procedure, where resampling introduces inter-particle correlations and path degeneracy may accumulate across rounds. We leave bounding the error of the resampling step of SMC-SD to future work. We validate empirically in \Cref{sec:experiments} that the per-round guarantees translate into strong end-to-end accuracy across benchmarks.

\subsection{High-Throughput SMC-SD Engine Design}
\label{sec:smc-sd-hardware}

SMC-SD is not merely an algorithm that happens to run on GPUs---its vectorized structure maps onto the batch-parallel execution model of modern accelerators and makes use of modern optimizations on inference engines including dynamic batching, PagedAttention, and RadixAttention \citep{280922,kwon2023vllm,zheng2024sglangefficientexecutionstructured}. As such, we build our SMC-SD engine as a fork of SGLang \citep{zheng2024sglangefficientexecutionstructured}. We describe our high-throughput inference engine for SMC-SD, organized around a series of observations at the intersection of algorithm and hardware.  

\paragraph{Observation 1: SMC-SD converts under-utilized compute into speed.}
Autoregressive decoding is heavily memory bandwidth-bound: each token requires
loading the full model weights yet performs only a single vector--matrix
multiply. The \emph{arithmetic intensity}, the FLOPs performed per byte of memory load, for a forward pass of a language model quantized to $b$ bytes per parameter remains at just $\mathrm{AI}_{\mathrm{AR}} = 2/b$. Standard speculative decoding increases the arithmetic intensity by a factor of $K$
but both methods remain far below the roofline ridge points of modern
GPUs---312 for A100, 412 for H200, 584 for B300 in
FP16~\citep{nvidia_a100_datasheet,nvidia_h100_datasheet,nvidia_h200_datasheet,nvidia_blackwell_datasheet}---leaving the vast majority
of available compute idle.

SMC-SD closes this gap by leveraging parallelism along the \emph{batch-wise} and \emph{sequence-wise} axes. The draft model uses dynamic batching to concurrently generate $N$ tokens at a time in lockstep, followed by the target model scoring a block of $N(K{+}1)$ tokens using a single re-prefill step.
Let $\mathcal{P}$ denote the target model's parameter count, $b$ the bytes per parameter ($b{=}2$ for FP16, $b{=}1$ for INT8), and $\rho = \mathcal{P}_{\dist}/\mathcal{P}_{\p}$ the draft-to-target parameter ratio.
Over a full SMC-SD cycle, the arithmetic intensity is (proofs in \Cref{app:smc-sd-analysis})
\begin{equation}\label{eq:ai-cycle}
    \text{AI}_{\text{cycle}} = \frac{2N(\rho K{+}K{+}1)}{b(\rho K + 1)} \;\approx\; \frac{2N(K{+}1)}{b} \quad\text{when } \rho \ll 1,
\end{equation}

an $\approx N\times$ increase over standard SD.
For Llama-3.1-70B in FP16 with $N{=}8$ and $K{=}16$, $\text{AI}_{\text{cycle}} \approx 136$ compared to just $17$ for standard SD with the same $K$.

\paragraph{Observation 2: Pointer operations turn prefix-sharing into compute and memory gains.}
The resampling step in SMC-SD introduces a novel challenge in inference engine design. When resampling duplicates a high-weight particle, a na\"ive implementation would require KV-cache data movement across particles and \textit{re-prefill} for the duplicated sequences. In SMC-SD, we avoid both of these costs by exploiting the prefix-sharing structure together with page-based KV-cache management \citep{kwon2023vllm} and RadixAttention \citep{zheng2024sglangefficientexecutionstructured}. Specifically, duplicated particles share exactly the same prefix as their ancestors up to the resampling point, so the engine can perform resampling by copying page metadata and incrementing the reference counts rather than copying the underlying KV tensors directly. As a result, resampling is implemented as an $O(\ell_{\text{seq}})$ page-metadata operation, rather than an $O(\ell_{\text{seq}} \times d_{\text{model}})$ KV-tensor copy. 
Exploiting prefix sharing not only avoids KV-cache data movement and reduces FLOPs, but also lowers memory usage. Empirically, for the Llama-1B$\,\to\,$8B, $N{=}8$, $K{=}16$ configuration, our optimization yields a \textbf{72.3\%} reduction in KV-cache usage.

\section{Experiments}
\label{sec:experiments}

We characterize the \emph{speed--accuracy Pareto frontier} of SMC-SD on a single H100 GPU across GSM8K, MATH500, AlpacaEval, and DS1000 \citep{dubois2025lengthcontrolledalpacaevalsimpleway, cobbe2021trainingverifierssolvemath, lai2022ds1000naturalreliablebenchmark, lightman2023letsverifystepstep}.
We test two model families---Llama~3.2-1B$\to$3.1-8B, and Qwen~0.5B$\to$14B (SD) versus Qwen~3B$\to$14B (SMC-SD)\footnote{For Qwen, the 0.5B draft makes for a faster speculative decoder than 3B because the smaller draft is faster and SD's acceptance rates do not justify the larger model. SMC-SD, by contrast, benefits from the stronger 3B proposal, so we compare each method with its best draft model.}---to demonstrate that SMC-SD's gains are consistent across architectures and draft-to-target size ratios.

\paragraph{GSM8K, MATH500, AlpacaEval, and DS1000.}
\Cref{fig:pareto} shows the speed--accuracy Pareto frontier for SMC-SD and optimized SD on GSM8K, MATH500, AlpacaEval, and DS1000.
Across all four benchmarks and both model families, the SMC-SD frontier lies above and to the right of the SD baseline, indicating simultaneous improvements in throughput and accuracy or favorable tradeoffs between the two.
\Cref{tab:speedup-summary} quantifies the gains at two operating points: \emph{iso-accuracy} (the fastest SMC-SD configuration within 3 percentage points of SD's accuracy) and \emph{max speed} (within 10 percentage points).
At iso-accuracy, SMC-SD achieves $1.1$--$2.5\times$ the throughput of SD; allowing a modest accuracy relaxation widens this to up to $3.4\times$ (\Cref{app:speedup-table}).

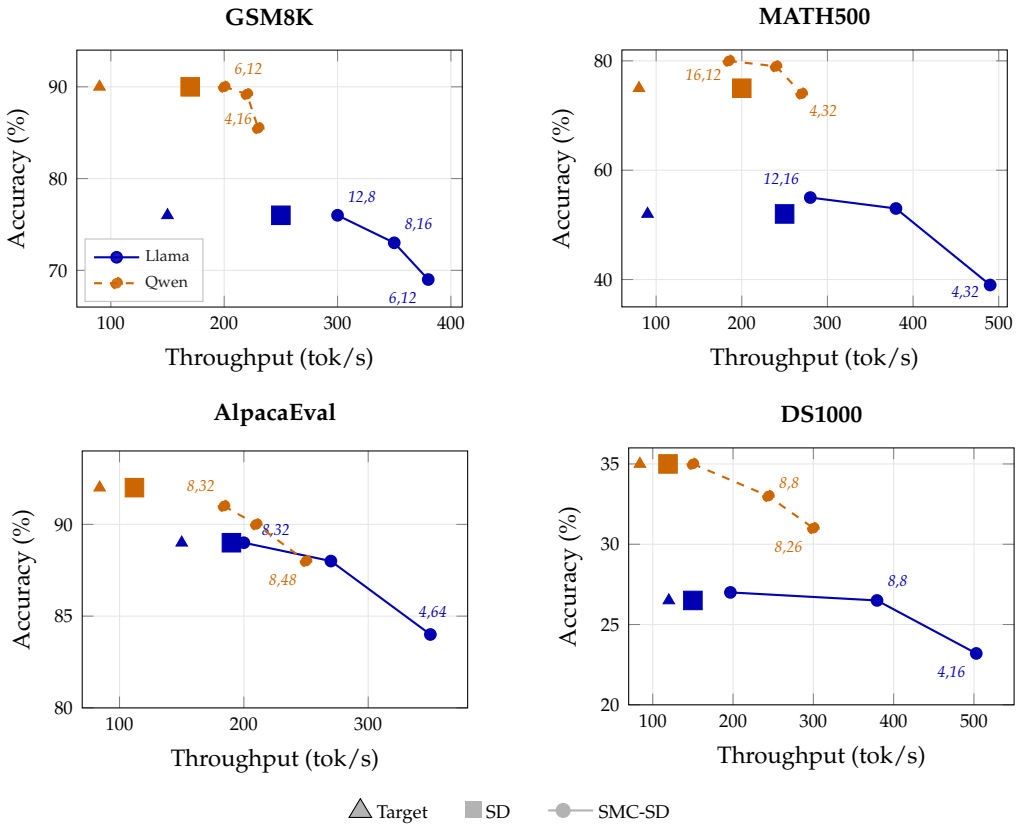
\begin{figure}[t]
\centering
\begin{minipage}{0.48\textwidth}
\centering
\begin{tikzpicture}
\begin{axis}[
    title={\textbf{GSM8K}},
    width=\textwidth, height=5cm,
    xlabel={Throughput (tok/s)},
    ylabel={Accuracy (\%)},
    xmin=70, xmax=410,
    ymin=66, ymax=94,
    grid=major, grid style={gray!20},
    legend style={font=\tiny, at={(0.02,0.02)}, anchor=south west,
                  draw=gray!50, fill=white, fill opacity=0.9, cells={anchor=west}},
    every axis label/.style={font=\small},
    tick label style={font=\scriptsize},
    title style={font=\small},
]
\addplot[forget plot, only marks, mark=triangle*, mark size=2.5pt, blue!70!black]
    coordinates {(150, 76)};

\addplot[forget plot, only marks, mark=square*, mark size=3.5pt, blue!70!black]
    coordinates {(250, 76)};

\addplot[thick, mark=*, mark size=2pt, blue!70!black]
    coordinates {(300, 76) (350, 73) (380, 69)};
\node[font=\tiny, blue!70!black, anchor=south west] at (axis cs:301, 76.2) {\textit{12,8}};
\node[font=\tiny, blue!70!black, anchor=south west] at (axis cs:351, 73.2) {\textit{8,16}};
\node[font=\tiny, blue!70!black, anchor=north east] at (axis cs:378, 68.7) {\textit{6,12}};

\addplot[forget plot, only marks, mark=triangle*, mark size=2.5pt, orange!80!black]
    coordinates {(90, 90)};

\addplot[forget plot, only marks, mark=square*, mark size=3.5pt, orange!80!black]
    coordinates {(170, 90)};

\addplot[thick, mark=*, mark size=2pt, orange!80!black, dashed]
    coordinates {(200, 90) (220, 89.2) (230, 85.5)};
\node[font=\tiny, orange!80!black, anchor=south west] at (axis cs:201, 90.1) {\textit{6,12}};
\node[font=\tiny, orange!80!black, anchor=south west] at (axis cs:221, 88.25) {};
\node[font=\tiny, orange!80!black, anchor=north east] at (axis cs:233, 88.2) {\textit{4,16}};

\addlegendentry{Llama}
\addlegendentry{Qwen}
\end{axis}
\end{tikzpicture}
\end{minipage}%
\hfill
\begin{minipage}{0.48\textwidth}
\centering
\begin{tikzpicture}
\begin{axis}[
    title={\textbf{MATH500}},
    width=\textwidth, height=5cm,
    xlabel={Throughput (tok/s)},
    ylabel={Accuracy (\%)},
    xmin=60, xmax=510,
    ymin=35, ymax=82,
    grid=major, grid style={gray!20},
    every axis label/.style={font=\small},
    tick label style={font=\scriptsize},
    title style={font=\small},
]
\addplot[forget plot, only marks, mark=triangle*, mark size=2.5pt, blue!70!black]
    coordinates {(90, 52)};

\addplot[forget plot, only marks, mark=square*, mark size=3.5pt, blue!70!black]
    coordinates {(250, 52)};

\addplot[forget plot, thick, mark=*, mark size=2pt, blue!70!black]
    coordinates {(280, 55) (380, 53) (490, 39)};
\node[font=\tiny, blue!70!black, anchor=south east] at (axis cs:278, 55.3) {\textit{12,16}};
\node[font=\tiny, blue!70!black, anchor=north east] at (axis cs:488, 40.5) {\textit{4,32}};

\addplot[forget plot, only marks, mark=triangle*, mark size=2.5pt, orange!80!black]
    coordinates {(80, 75)};

\addplot[forget plot, only marks, mark=square*, mark size=3.5pt, orange!80!black]
    coordinates {(200, 75)};

\addplot[forget plot, thick, mark=*, mark size=2pt, orange!80!black, dashed]
    coordinates {(185, 80) (240, 79) (270, 74)};
\node[font=\tiny, orange!80!black, anchor=north east] at (axis cs:187, 80.2) {\textit{16,12}};
\node[font=\tiny, orange!80!black, anchor=south west] at (axis cs:242, 79.2) {};
\node[font=\tiny, orange!80!black, anchor=north west] at (axis cs:268, 73.7) {\textit{4,32}};
\end{axis}
\end{tikzpicture}
\end{minipage}

\vspace{4pt}

\begin{minipage}{0.48\textwidth}
\centering
\begin{tikzpicture}
\begin{axis}[
    title={\textbf{AlpacaEval}},
    width=\textwidth, height=5cm,
    xlabel={Throughput (tok/s)},
    ylabel={Accuracy (\%)},
    xmin=70, xmax=380,
    ymin=80, ymax=94,
    grid=major, grid style={gray!20},
    every axis label/.style={font=\small},
    tick label style={font=\scriptsize},
    title style={font=\small},
]
\addplot[forget plot, only marks, mark=triangle*, mark size=2.5pt, blue!70!black]
    coordinates {(150, 89)};
\addplot[forget plot, only marks, mark=square*, mark size=3.5pt, blue!70!black]
    coordinates {(190, 89)};

\addplot[thick, mark=*, mark size=2pt, blue!70!black]
    coordinates {(200, 89) (270, 88) (350, 84)};
\node[font=\tiny, blue!70!black, anchor=south west] at (axis cs:207, 88.8) {\textit{8,32}};
\node[font=\tiny, blue!70!black, anchor=south] at (axis cs:352, 84.3) {\textit{4,64}};

\addplot[forget plot, only marks, mark=triangle*, mark size=2.5pt, orange!80!black]
    coordinates {(84, 92)};
\addplot[forget plot, only marks, mark=square*, mark size=3.5pt, orange!80!black]
    coordinates {(112, 92)};

\addplot[thick, mark=*, mark size=2pt, orange!80!black, dashed]
    coordinates {(184, 91) (210, 90) (250, 88)};
\node[font=\tiny, orange!80!black, anchor=south east] at (axis cs:184, 91.2) {\textit{8,32}};
\node[font=\tiny, orange!80!black, anchor=south west] at (axis cs:210, 90.2) {};
\node[font=\tiny, orange!80!black, anchor=north east] at (axis cs:250, 87.8) {\textit{8,48}};
\end{axis}
\end{tikzpicture}
\end{minipage}%
\hfill
\begin{minipage}{0.48\textwidth}
\centering
\begin{tikzpicture}
\begin{axis}[
    title={\textbf{DS1000}},
    width=\textwidth, height=5cm,
    xlabel={Throughput (tok/s)},
    ylabel={Accuracy (\%)},
    xmin=70, xmax=550,
    ymin=20, ymax=36,
    grid=major, grid style={gray!20},
    every axis label/.style={font=\small},
    tick label style={font=\scriptsize},
    title style={font=\small},
]
\addplot[forget plot, only marks, mark=triangle*, mark size=2.5pt, blue!70!black]
    coordinates {(120, 26.5)};
\addplot[forget plot, only marks, mark=square*, mark size=3.5pt, blue!70!black]
    coordinates {(150, 26.5)};

\addplot[forget plot, only marks, mark=triangle*, mark size=2.5pt, orange!80!black]
    coordinates {(84, 35)};
\addplot[forget plot, only marks, mark=square*, mark size=3.5pt, orange!80!black]
    coordinates {(119, 35)};

\addplot[thick, mark=*, mark size=2pt, blue!70!black]
    coordinates {(196.7, 27) (379.3, 26.5) (502.9, 23.2)};
\node[font=\tiny, blue!70!black, anchor=south west] at (axis cs:381, 26.7) {\textit{8,8}};
\node[font=\tiny, blue!70!black, anchor=north east] at (axis cs:501, 23.0) {\textit{4,16}};

\addplot[thick, mark=*, mark size=2pt, orange!80!black, dashed]
    coordinates {(150, 35) (244, 33) (300, 31)};
\node[font=\tiny, orange!80!black, anchor=south west] at (axis cs:246, 32.8) {\textit{8,8}};
\node[font=\tiny, orange!80!black, anchor=north east] at (axis cs:298, 30.8) {\textit{8,26}};
\end{axis}
\end{tikzpicture}
\end{minipage}

\vspace{2pt}
\centering
\begin{tikzpicture}
\node[font=\scriptsize] at (0,0) {%
  \tikz{\draw[fill=gray!60] (0,0.25) -- (0.15,0) -- (-0.15,0) -- cycle;}~Target\qquad
  \tikz{\fill[gray!60] (0,0) rectangle (0.22,0.22);}~SD\qquad
  \tikz{\fill[gray!60] (0,0.11) circle (0.1);\draw[thick,gray!60] (-0.2,0.11)--(0.4,0.11);}~SMC-SD
};
\end{tikzpicture}
\caption{Speed--accuracy Pareto frontier on GSM8K (top left), MATH500 (top right), AlpacaEval (bottom left), and DS1000 (bottom right).
Blue/solid: Llama~3.2-1B $\to$ 3.1-8B;\; orange/dashed: Qwen (SD: 0.5B$\to$14B; SMC-SD: 3B$\to$14B).
Italic labels show $(N,K)$ configurations along the SMC-SD Pareto frontier. Experiments were conducted using a single H100 SXM GPU. 
}
\label{fig:pareto}
\end{figure}

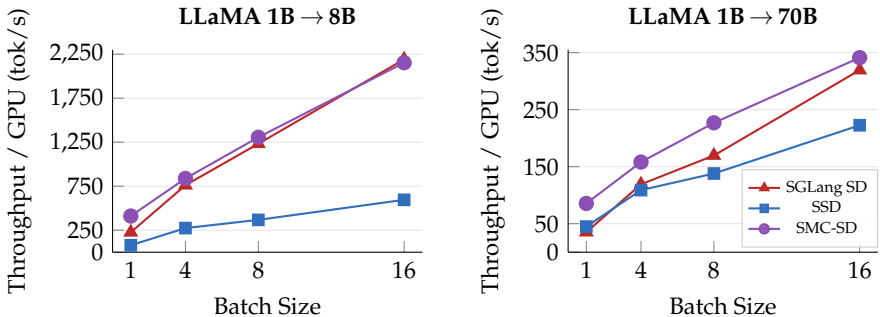
\begin{figure}[t]
\centering
\definecolor{colSD}{RGB}{190,40,40}
\definecolor{colSSD}{RGB}{50,110,190}
\definecolor{colSMCSD}{RGB}{140,80,180}
\begin{minipage}{0.85\linewidth}
\centering
\begin{minipage}{0.48\linewidth}
\centering
\begin{tikzpicture}
\begin{axis}[
    title={\textbf{LLaMA 1B\,$\to$\,8B}},
    title style={font=\small},
    width=\linewidth,
    height=0.75\linewidth,
    xlabel={Batch Size},
    ylabel={Throughput / GPU (tok/s)},
    xlabel style={font=\footnotesize},
    ylabel style={font=\footnotesize},
    xmin=0, xmax=17,
    ymin=0, ymax=2300,
    xtick={1,4,8,16},
    ytick={0,250,750,1250,1750,2250},
    xticklabel style={font=\footnotesize},
    yticklabel style={font=\footnotesize},
    ymajorgrids=true,
    grid style={gray!20},
    axis lines*=left,
    clip=false,
]
\addplot[thick, mark=triangle*, mark size=2.5pt, colSD, mark options={fill=colSD}]
    coordinates {(1,226.4) (4,761.6) (8,1232.5) (16,2191.1)};
\addplot[thick, mark=square*, mark size=2pt, colSSD, mark options={fill=colSSD}]
    coordinates {(1,79.1) (4,274.0) (8,367.3) (16,595.2)};
\addplot[thick, mark=*, mark size=2.5pt, colSMCSD, mark options={fill=colSMCSD}]
    coordinates {(1,410.91) (4,838) (8,1305.9) (16,2153.1)};
\end{axis}
\end{tikzpicture}
\end{minipage}%
\hfill
\begin{minipage}{0.48\linewidth}
\centering
\begin{tikzpicture}
\begin{axis}[
    title={\textbf{LLaMA 1B\,$\to$\,70B}},
    title style={font=\small},
    width=\linewidth,
    height=0.75\linewidth,
    xlabel={Batch Size},
    ylabel={Throughput / GPU (tok/s)},
    xlabel style={font=\footnotesize},
    ylabel style={font=\footnotesize},
    xmin=0, xmax=17,
    ymin=0, ymax=355,
    xtick={1,4,8,16},
    ytick={0,50,150,250,350},
    xticklabel style={font=\footnotesize},
    yticklabel style={font=\footnotesize},
    ymajorgrids=true,
    grid style={gray!20},
    axis lines*=left,
    clip=false,
    legend style={
        at={(0.56,0.41)},
        anchor=north west,
        font=\tiny,
        draw=gray!40,
        fill=white,
        fill opacity=0.9,
        text opacity=1,
        row sep=-2pt,
        inner sep=2pt,
        legend image post style={xscale=0.7, yscale=0.7},
    },
]
\addplot[thick, mark=triangle*, mark size=2.5pt, colSD, mark options={fill=colSD}]
    coordinates {(1,35.2) (4,119.1) (8,169.6) (16,319.4)};
\addlegendentry{SGLang SD}
\addplot[thick, mark=square*, mark size=2pt, colSSD, mark options={fill=colSSD}]
    coordinates {(1,45.0) (4,108.9) (8,138.0) (16,222.6)};
\addlegendentry{SSD}
\addplot[thick, mark=*, mark size=2.5pt, colSMCSD, mark options={fill=colSMCSD}]
    coordinates {(1,85.5) (4,158.1) (8,226.9) (16,341.2)};
\addlegendentry{SMC-SD}
\end{axis}
\end{tikzpicture}
\end{minipage}
\end{minipage}
\caption{Throughput per GPU vs.\ batch size on ShareGPT for SGLang SD, SSD, and SMC-SD.
1B$\to$8B uses 1~GPU for SGLang~SD and SMC-SD, 2~GPUs for SSD.
1B$\to$70B uses 4~H100 GPUs for SGLang~SD and SMC-SD, 5~H100 GPUs for SSD.
SMC-SD achieves the highest per-GPU throughput across most batch sizes for both model pairs. 
}
\label{fig:batch-throughput}
\end{figure}

Despite being an approximate sampler, SMC-SD maintains strong downstream accuracy: every point on the Pareto frontier in \Cref{fig:pareto} remains within 15\% of the target model's accuracy. The throughput gains come from a structural difference in how SMC-SD uses the target forward pass. In standard SD, the first rejected token truncates the draft and triggers a KV-cache rollback, so the number of tokens produced per round is variable and often much less than $K{+}1$. SMC-SD always produces exactly $K{+}1$ tokens per particle per round---there is nothing to roll back.

\paragraph{Multi-GPU throughput.}
We deploy SMC-SD on a Llama~3.2-1B$\,\to\,$Llama~3-70B draft-target pair across 4~H100 GPUs.
\Cref{fig:main_throughput} summarizes the results: SMC-SD achieves 342~tok/s, a $5.2\times$ speed-up over the autoregressive baseline and a $2.36\times$ speed-up over optimized tree-based speculative decoding (SGLang SD).
\Cref{fig:batch-throughput} shows throughput per GPU as a function of batch size for both the 1B$\to$8B and 1B$\to$70B pairs; SMC-SD leads across most batches tested.

\section{Related Work}
\label{app:related-work}

\paragraph{SMC for Language Models.}
Recent work has applied SMC to large language models for \emph{constrained generation}: steering outputs toward syntactic constraints \citep{loula2025syntactic, xefteri-etal-2025-syntactic}, semantic specifications \citep{lew2023sequential}, or reward-defined posteriors \citep{zhao2024smc}.
In these settings, the target is a posterior $\p(x \mid \text{constraint})$ and the proposal is the unconstrained language model.
SMC-SD repurposes the same machinery for \emph{inference acceleration}: the target is $\p$ itself and the proposal is a faster draft model $\dist$.

\paragraph{Exact speculative decoding.}
Tree decoding algorithms \citep{cai2024medusasimplellminference, li2025eagle3} extend standard SD by proposing a tree of tokens at each position, increasing the expected number of accepted tokens while preserving exact sampling.
SMC-SD trades exactness for speed via a tunable approximation.

\paragraph{Approximate speculative decoding.}
Several methods relax exactness through heuristic acceptance criteria.
FSD \citep{holsman2025fuzzy} accepts tokens based on draft--target divergence;
FLy \citep{li2025fly} uses an entropy gate with a deferred acceptance window;
Judge Decoding \citep{bachmann2025judgedecoding} trains a judge network to replace rejection sampling.
All retain a per-token accept/reject gate: a single rejection truncates the block, each token's contribution is binary, and outputs are variable-length.
SMC-SD instead assigns each \emph{block} a continuous importance weight, produces a fixed number of tokens per round, and provides asymptotic convergence guarantees---all without additional training.

\paragraph{Complementary strategies.}
Several orthogonal techniques can be combined with SMC-SD.
On the \emph{drafting} side, EAGLE \citep{li2024eagle2fasterinferencelanguage, li2025eagle3, li2025eaglespeculativesamplingrequires} conditions the draft model on target-model embeddings, Medusa \citep{cai2024medusasimplellminference} attaches parallel decoding heads, and DFlash \citep{chen2026dflashblockdiffusionflash} uses a diffusion model---all producing higher-quality or faster proposals that would directly benefit SMC-SD.
On the \emph{scheduling} side, Specdec++ \citep{huang2025specdec} adaptively modulates speculation length and SSD \citep{kumar2026speculativespeculativedecoding} asynchronously overlaps drafting and verifying; analogous adaptive strategies over $N$ and $K$ are a natural extension of SMC-SD. All results use the optimal temperature parameters for each task for both SMC-SD and SD.

\section{Conclusion}

We presented SMC-SD, an algorithm that accelerates LLM inference by replacing speculative decoding's token-level rejection with importance-weighted resampling over a population of draft particles.
We established non-asymptotic error bounds showing that the bias and MSE of the importance sampling component of SMC-SD decays as $O(1/N)$, and we demonstrated empirically that SMC-SD achieves $2.36\times$ the throughput of optimized speculative decoding and $5.2\times$ over autoregressive decoding in a multi-GPU setting, while remaining within 3\% of the target model's accuracy across reasoning, instruction-following, and coding benchmarks.

Beyond acceleration, a key property of SMC-SD is that the same importance-weighting machinery can target distributions beyond $\p$ itself.
Given a factor $\Psi$, SMC-SD can sample approximately from distributions of the form $\pi(x_{1:T}) \propto \p(x_{1:T}) \cdot \Psi(x_{1:T})$, provided one can define a sequence of potential functions $\{\psi_t\}_{t=1}^{T}$ from which properly weighted pairs $(\Weight_t, x_t) \sim \psi_t$ can be drawn \citep{del2006sequential}. The usual SMC convergence guarantees carry over. Three natural instantiations are:
\begin{itemize}[leftmargin=2em, itemsep=4pt]
  \item \textbf{Power sampling} ($\Psi = p^{\alpha-1}$, targeting $\pi \propto p^\alpha$). Unlike per-token temperature scaling, this tempers the distribution over \emph{entire sequences}. \Cref{app:power-sampling} shows a 10\% Pass@1 improvement on MATH500 with Llama-1B$\,\to\,$8B and a dominating Pass@$k$ for $k>1$.
  \item \textbf{Constrained generation} ($\Psi = \mathbf{1}[x \in \mathcal{A}]$). Hard constraints---syntactic validity, format compliance, length bounds---can be enforced while preserving the target's distribution conditioned on the constraint. Prior work \citep{lew2023sequential, loula2025syntactic, xefteri-etal-2025-syntactic} has shown that such constraints can improve accuracy on coding benchmarks; incorporating them into SMC-SD could simultaneously increase downstream performance while accelerating inference.
  \item \textbf{Reward-weighted decoding} ($\Psi = \exp(\beta \cdot R(x))$, targeting $\pi \propto \p \cdot e^{\beta R}$). This targets the KL-regularized optimal policy arising in RLHF without fine-tuning, which is relevant for agentic settings where the same base model must be steered toward different objectives.
\end{itemize}
Standard speculative decoding cannot express any of these: it requires knowledge of the normalizing constant of the target, which is unavailable for this class of distributions.

On the systems side, much work remains to achieve the theoretical roofline model speedup of SMC-SD demonstrated in \Cref{fig:speedup-vs-K-N}. 
First, there are opportunities to overlap resampling with drafting to remove its performance overhead altogether. 
This can be done by preemptively speculating the result of the ESS threshold, and beginning the next round of drafting accordingly.
Furthermore, the draft and target models can be disaggregated across hardware, and asynchronously dispatch drafting and scoring calls. 
Second, by allowing $N$ and $K$ to vary dynamically, SMC-SD can adapt to underlying hardware and workload characteristics, enabling runtime exploration of its tunable Pareto frontier.
Finally, SMC-SD reflects a trend in co-designing inference algorithms with asymmetric hardware scaling---compute grows faster than memory bandwidth~\citep{zadouri2026flashattention4algorithmkernelpipelining}. For instance, the Blackwell GPU architecture significantly increases FLOPS over the previous generation of Hopper GPU and keeps memory bandwidth approximately equal; this increased compute budget makes it an ideal target for SMC-SD.

\section*{Author Contributions}

\begin{itemize}
    \item \textbf{Yahya Emara} (\texttt{ye52@cornell.edu}): Lead system design/inference engine development, experiment design, formal analysis (arithmetic intensity), visualization, writing.
    \item \textbf{Mauricio Barba da Costa} (\texttt{barba@mit.edu}): Research conception, formal analysis (speed-up, arithmetic intensity, approximation error), experiment design (prototype, power sampling), visualization, writing.
    \item \textbf{Chi-Chih Chang} (\texttt{cc2869@cornell.edu}): System design/inference engine development, visualization, writing.
    \item \textbf{Cameron Freer} (\texttt{freer@mit.edu}): Senior project leadership, project advising and mentorship, technical advice.
     \item \textbf{Tim Vieira} (\texttt{tim.f.vieira@gmail.com}): Senior project leadership, project advising and mentorship, technical advice.
    \item \textbf{Ryan Cotterell} (\texttt{ryan.cotterell@inf.ethz.ch}): Formal analysis (approximation error), senior project leadership, project advising and mentorship, technical advice, writing.
    \item \textbf{Mohamed Abdelfattah} (\texttt{mohamed@cornell.edu}): Senior project leadership, formal analysis (speed-up, arithmetic intensity analysis), system design, project narrative development, project advising and mentorship, technical advice, visualization, writing.
\end{itemize}

\bibliography{colm2026_conference}
\bibliographystyle{colm2026_conference}

\newpage
\appendix

\newpage

\section{Speculative Decoding Algorithm}
\label{app:spec-decoding}

We give the full pseudocode for the standard speculative decoding procedure of \citet{leviathan2023fast, chen2023accelerating}, referenced in \Cref{sec:bg-speculative}.
Given a draft length $K$ and the draft--target pair $(\dist, \p)$, each iteration proceeds in two phases.
In the \textbf{draft} phase, given the string generated so far $\srcStr$, the draft model generates a draft continuation $\draftStr$ of at most $K$ tokens autoregressively from $\prefixq$.
In the \textbf{verification} phase, the target model scores all draft tokens in a single forward pass.
Each draft token $\draftSym_j$ is then tested sequentially using the standard rejection sampling criterion: it is accepted with probability $\accp_j = \min\bigl(1, \prefixp(\draftSym_j \mid \srcStr\,\draftStr_{<j}) / \prefixq(\draftSym_j \mid \srcStr\,\draftStr_{<j})\bigr)$. On the first rejection, the token is resampled from the residual distribution $\max\bigl(0,\; \prefixp(\cdot \mid \srcStr\,\draftStr_{<j}) - \prefixq(\cdot \mid \srcStr\,\draftStr_{<j})\bigr)$ and the draft is truncated.
If every token passes, a bonus token is drawn from $\prefixp(\cdot \mid \srcStr\,\draftStr)$.

\begin{algorithm}[H]
\caption{Speculative Decoding (one round)}\label{alg:spec-decoding}
\begin{algorithmic}[1]
\Require Prefix $\srcStr$, draft model $\dist$, target model $\p$, draft length $K$
\Ensure Extended string $\srcStr\, \draftStr$
\Statex \textbf{--- Draft ---}
\State $\draftStr \gets \varepsilon$
\For{$t = 1, \ldots, K$}
    \State $\draftSym_t \sim \prefixq(\cdot \mid \srcStr\, \draftStr)$
    \State $\draftStr \gets \draftStr\, \draftSym_t$
\EndFor
\Statex \textbf{--- Verification ---}
\For{$j = 1, \ldots, K$}
    \State $u \sim \mathrm{Uniform}(0,1)$
    \If{$u > \min\bigl(1,\; \prefixp(\draftSym_j \mid \srcStr\,\draftStr_{<j})\,/\,\prefixq(\draftSym_j \mid \srcStr\, \draftStr_{<j})\bigr)$} \Comment{reject}
        \State $\draftSym_j \sim \max\bigl(0,\; \prefixp(\cdot \mid \srcStr\,\draftStr_{<j}) - \prefixq(\cdot \mid \srcStr\, \draftStr_{<j})\bigr) / Z_j$ \Comment{$Z_j$ normalizes}
        \State $\draftStr \gets \draftStr_{<j}\, \draftSym_j$ \Comment{truncate and replace}
        \State \textbf{break}
    \ElsIf{$j = K$} \Comment{all tokens accepted}
        \State $\draftSym \sim \prefixp(\cdot \mid \srcStr\, \draftStr)$ \Comment{bonus token}
        \State $\draftStr \gets \draftStr\, \draftSym$
    \EndIf
\EndFor
\State \Return $\srcStr\, \draftStr$
\end{algorithmic}
\end{algorithm}

\newpage

\section{Proofs}
\label{app:proofs}

Throughout this section, $W \defeq \p(\RV)/\dist(\RV)$ denotes the importance weight and $\chi^2(\p \| \dist) \defeq \E_{\dist}[(W - 1)^2]$ the $\chi^2$-divergence.

\subsection{Proof of \Cref{prop:sir-tv}}

We first state the two supporting lemmata, then prove each.

\begin{lemma}\label{lem:unnorm-is}
Let $\srcStr\psup{1}, \ldots, \srcStr\psup{N} \iid \dist$ with $\p \ll \dist$, and let  $\ptilde(A) \defeq \frac{1}{N}\sum_{n=1}^N \frac{\p(\srcStr\psup{n})}{\dist(\srcStr\psup{n})} \indicator{\srcStr\psup{n} \in A}$. Then for each event $A \subseteq \srcStrings$:
\begin{equation}
    \E[(\ptilde(A) - \p(A))^2] \;=\; \frac{1}{N}\left(\sum_{\srcStr \in A}\frac{\p(\srcStr)^2}{\dist(\srcStr)} - \p(A)^2\right) \;\leq\; \frac{1 + \chi^2(\p \| \dist)}{N}.
\end{equation}
\end{lemma}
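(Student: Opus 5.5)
Since $\ptilde(A)$ is an average of $N$ i.i.d.\ terms, the plan is a direct variance computation: show the estimator is unbiased, compute the per-sample second moment, and bound it by the $\chi^2$-divergence. Set
\begin{equation*}
Y_n \defeq \frac{\p(\srcStr\psup{n})}{\dist(\srcStr\psup{n})}\,\indicator{\srcStr\psup{n} \in A},
\end{equation*}
so that $\ptilde(A) = \frac{1}{N}\sum_{n=1}^N Y_n$ with the $Y_n$ i.i.d.

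\textbf{Step 1 (unbiasedness).} Compute the mean as a sum over the support of $\dist$:
\begin{equation*}
\E[Y_n] = \sum_{\srcStr : \dist(\srcStr) > 0} \dist(\srcStr)\,\frac{\p(\srcStr)}{\dist(\srcStr)}\,\indicator{\srcStr \in A} = \p(A).
\end{equation*}
Absolute continuity $\p \ll \dist$ is what allows dropping the strings with $\dist(\srcStr)=0$: they carry no $\p$-mass. Hence $\E[\ptilde(A)] = \p(A)$, and the mean squared error equals $\mathrm{Var}(\ptilde(A)) = \frac{1}{N}\mathrm{Var}(Y_1)$ by independence.

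\textbf{Step 2 (the equality).} Compute the second moment the same way:
\begin{equation*}
\E[Y_1^2] = \sum_{\srcStr \in A} \dist(\srcStr)\,\frac{\p(\srcStr)^2}{\dist(\srcStr)^2} = \sum_{\srcStr \in A}\frac{\p(\srcStr)^2}{\dist(\srcStr)},
\end{equation*}
again restricting to $\dist(\srcStr)>0$. Subtracting $\p(A)^2$ gives $\mathrm{Var}(Y_1)$, and dividing by $N$ gives exactly the stated identity. If this sum is infinite, both sides are $+\infty$ and the identity holds trivially; it is finite under the paper's moment assumption.

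\textbf{Step 3 (the bound).} Drop the nonpositive term $-\p(A)^2$ and enlarge the sum from $A$ to all of $\srcStrings$; every term is nonnegative, so this can only increase it. The full sum is $\E_{\dist}[W^2]$, and expanding the square gives
\begin{equation*}
\E_{\dist}[W^2] = \E_{\dist}[(W-1)^2] + 2\E_{\dist}[W] - 1 = 1 + \chi^2(\p\|\dist),
\end{equation*}
using $\E_{\dist}[W]=1$, which is Step 1 with $A = \srcStrings$.

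There is no real obstacle in this argument. The only point needing care is the measure-zero bookkeeping: sums must be taken over the support of $\dist$, and $\p \ll \dist$ is exactly what guarantees no $\p$-mass is lost there. A minor convention is that the $\chi^2$-divergence is defined through $W$ and is finite under the paper's moment assumption.
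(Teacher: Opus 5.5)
Your proposal is correct and follows the paper's proof. Unbiasedness reduces the MSE to $\frac{1}{N}$ times the variance of a single weighted indicator, whose second moment is $\sum_{\srcStr \in A} \p(\srcStr)^2/\dist(\srcStr)$; dropping $-\p(A)^2$ and enlarging the sum to $\srcStrings$ then gives $\E_{\dist}[W^2] = 1 + \chi^2(\p \| \dist)$. Your explicit treatment of the support of $\dist$ and of the infinite-second-moment case is a harmless refinement of details the paper leaves implicit.
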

\begin{proof}
The estimator $\ptilde(A)$ is unbiased ($\E[\ptilde(A)] = \p(A)$), so its MSE equals its variance:
\begin{align}
    \E[(\ptilde(A) - \p(A))^2]
    &= \Var_{\dist}\!\left[\frac{\p(\RV)}{\dist(\RV)} \indicator{\RV \in A}\right] / N
    && \textcolor{gray}{\text{(unbiased; MSE $=$ Var)}} \\
    &= \frac{1}{N}\left(\E_{\dist}\!\left[\frac{\p(\RV)^2}{\dist(\RV)^2} \indicator{\RV \in A}\right] - \p(A)^2\right)
    && \textcolor{gray}{\text{($\Var[\cdot] = \E[\cdot^2] - \E[\cdot]^2$)}} \\
    &= \frac{1}{N}\left(\sum_{\srcStr \in A} \frac{\p(\srcStr)^2}{\dist(\srcStr)} - \p(A)^2\right)
    && \textcolor{gray}{\text{(expanding)}} \\
    &\leq \frac{1}{N}\sum_{\srcStr \in A} \frac{\p(\srcStr)^2}{\dist(\srcStr)}
    && \textcolor{gray}{\text{($\p(A)^2 \geq 0$)}} \\
    &\leq \frac{1}{N}\sum_{\srcStr \in \srcStrings} \frac{\p(\srcStr)^2}{\dist(\srcStr)}
    = \frac{1 + \chi^2(\p \| \dist)}{N}
    && \textcolor{gray}{\text{($A \subseteq \srcStrings$)}} \qedhere
\end{align}
\end{proof}

\begin{lemma}\label{lem:snis-per-string}
Let $\srcStr\psup{1}, \ldots, \srcStr\psup{N} \iid \dist$ with $\p \ll \dist$ and $\E_{\dist}[W^4] < \infty$. Let $\phat(A) \defeq \frac{\ptilde(A)}{\ptilde(\srcStrings)}$ for any $A \subseteq \srcStrings$.
\begin{enumerate}[label=(\alph*)]
    \item\label{item:snis-event} \textbf{(Event-level bound.)} For any event $A \subseteq \srcStrings$,
    \begin{equation}\label{eq:snis-event-bound}
        \E[(\phat(A) - \p(A))^2] \leq \frac{1 + \chi^2(\p \| \dist)}{N} + O(N^{-3/2}).
    \end{equation}
    \item\label{item:snis-singleton} \textbf{(Singleton refinement.)} For each string $\srcStr \in \srcStrings$,
    \begin{align}\label{eq:snis-singleton}
        \E[(\phat(\srcStr) - \p(\srcStr))^2]
        &= \frac{1}{N}\!\Bigl(\frac{\p(\srcStr)^2}{\dist(\srcStr)}
        - 2\p(\srcStr)\frac{\p(\srcStr)^2}{\dist(\srcStr)}
        + \p(\srcStr)^2 (1+\chi^2(\p \| \dist))\Bigr) \notag\\
        &\quad + \frac{C(\srcStr)}{N^{3/2}},
    \end{align}
    where $|C(\srcStr)| \leq c \frac{\p(\srcStr)^3}{\dist(\srcStr)^2}$ for a universal constant $c > 0$.
\end{enumerate}
\end{lemma}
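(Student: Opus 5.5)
The natural route is the delta method applied to the ratio $\phat(A) = \ptilde(A)/\ptilde(\srcStrings)$. Write $U \defeq \ptilde(A) - \p(A)$ and $V \defeq \ptilde(\srcStrings) - 1$; both are averages of $N$ i.i.d.\ centered terms, since $\E[\ptilde(\cdot)] = \p(\cdot)$ by \Cref{lem:unnorm-is}. Then
\begin{equation*}
\phat(A) - \p(A) = \frac{\p(A) + U}{1+V} - \p(A) = \frac{U - \p(A)V}{1+V}.
\end{equation*}
The leading term is $D \defeq U - \p(A)V$. This is again an average of i.i.d.\ centered variables, namely $W(\indicator{\RV \in A} - \p(A))$ under $\dist$. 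Hence
\begin{equation*}
\E[D^2] = \frac{1}{N}\,\E_{\dist}\bigl[W^2(\indicator{\RV\in A}-\p(A))^2\bigr].
\end{equation*}
Since $|\indicator{\cdot}-\p(A)|\le 1$, this is at most $\E_\dist[W^2]/N = (1+\chi^2(\p\|\dist))/N$, which gives the main term in part \ref{item:snis-event}.

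For part \ref{item:snis-singleton}, take $A=\{\srcStr\}$ and expand the square exactly:
\begin{equation*}
\E_\dist\bigl[W^2(\indicator{\RV=\srcStr}-\p(\srcStr))^2\bigr] = \frac{\p(\srcStr)^2}{\dist(\srcStr)} - 2\p(\srcStr)\frac{\p(\srcStr)^2}{\dist(\srcStr)} + \p(\srcStr)^2\E_\dist[W^2].
\end{equation*}
This is precisely the displayed $1/N$ coefficient.

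The remainder comes from $D^2\bigl((1+V)^{-2} - 1\bigr)$. I would handle it with the identity $\frac{1}{1+V} = 1 - V + \frac{V^2}{1+V}$, which gives
\begin{equation*}
(\phat(A)-\p(A))^2 = D^2\bigl(1 - 2V + O(V^2)\bigr)
\end{equation*}
on the good event $\{|V|\le 1/2\}$. The cross term $\E[D^2 V]$ is a third-order moment of centered i.i.d.\ averages, so it is $O(N^{-2})$. The term $\E[D^2V^2]$ is $O(N^{-2})$ by Cauchy--Schwarz together with Rosenthal/Marcinkiewicz--Zygmund fourth-moment bounds. These fourth-moment bounds are where $\E_\dist[W^4]<\infty$ is used: it gives $\E[V^4], \E[D^4] = O(N^{-2})$.

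On the bad event $\{|V|>1/2\}$, use the trivial bound $|\phat(A)-\p(A)|\le 1$, valid since both quantities lie in $[0,1]$. By Chebyshev with fourth moments, $\Pr[|V|>1/2]\le 16\,\E[V^4] = O(N^{-2})$. Altogether the remainder is $O(N^{-2})\subseteq O(N^{-3/2})$, as claimed.

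The main obstacle is making the singleton remainder constant explicit, $|C(\srcStr)|\le c\,\p(\srcStr)^3/\dist(\srcStr)^2$, rather than merely depending on the moments of $W$. For the cross term I would compute $\E[D^2V]$ exactly as $N^{-2}\E_\dist[W^3(\indicator{\RV=\srcStr}-\p(\srcStr))^2(\ldots)]$. Its dominant piece involves $\E_\dist[W^3\indicator{\RV=\srcStr}] = \p(\srcStr)^3/\dist(\srcStr)^2$. The pieces involving $\p(\srcStr)^2\E_\dist[W^3]$ I would try to absorb by noting they carry extra factors of $\p(\srcStr)$. If the stated form cannot be achieved exactly, I would fall back to a remainder bounded by $c\,N^{-3/2}$ times the moments $\E_\dist[W^3]$ and $\E_\dist[W^4]$, uniformly in $\srcStr$. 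That weaker form still suffices for the summed bounds in \Cref{prop:sir-tv}.
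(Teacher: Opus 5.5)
Part~(a) and the leading term of part~(b) are fine and follow the paper's route: the same numerator $D=U-\p(A)V$ over $1+V$, the same split at $|V|=\tfrac12$ with Markov on $V^4$, and a direct expansion of $\E_{\dist}[W^2(\indicator{\RV=\srcStr}-\p(\srcStr))^2]$ that reaches the paper's Step~4 formula more quickly. Your exact second-order cross term $\E[D^2V]=N^{-2}\E_{\dist}[W^2(\indicator{\RV\in A}-\p(A))^2(W-1)]$ even improves the paper's $O(N^{-3/2})$ to $O(N^{-2})$. For that you must also bound the truncation errors $\E[D^2\indicator{|V|>1/2}]$ and $\E[D^2|V|\,\indicator{|V|>1/2}]$, since the expansion holds only on the good event; H\"older with $\Pr[|V|>\tfrac12]=O(N^{-2})$ does it. You should also note that all constants are uniform in $A$.

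The genuine gap is the singleton remainder, and your absorption step cannot work. Write $p_x\defeq\p(\srcStr)$ and $q_x\defeq\dist(\srcStr)$. Then $\E[D_{\srcStr}^2V]=N^{-2}\bigl[(1-2p_x)(p_x^3/q_x^2-p_x^2/q_x)+p_x^2(\E_{\dist}[W^3]-\E_{\dist}[W^2])\bigr]$. The pieces $p_x^2/q_x$ and $p_x^2$ exceed $p_x^3/q_x^2$ by the factors $q_x/p_x$ and $q_x^2/p_x$, which are unbounded when $q_x\gg p_x$. Worse, the bad event contributes a term with no factor of $p_x$ at all.

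In fact the claimed form is false. Let $\dist$ be uniform on two strings $a,b$ with $\p(a)=\epsilon$ and $\p(b)=1-\epsilon$. The displayed $1/N$ term at $a$ is $4\epsilon^2(1-\epsilon)^2/N$. But with probability $2^{-N}$ every draw equals $a$, which forces $\phat(a)=1$, so $C(a)\ge N^{3/2}(1-\epsilon)^2(2^{-N}-4\epsilon^2/N)$. For each fixed $N$ this tends to $N^{3/2}2^{-N}>0$ as $\epsilon\to0$, whereas $c\,\p(a)^3/\dist(a)^2=4c\epsilon^3\to0$.

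The paper's Step~5 has the same hole. It bounds only $\E[U^2V]$, and asserts that $\E[UV^2]$ and $\E[V^3]$ are ``bounded similarly'' although $p_x^2\E[V^3]=p_x^2\E_{\dist}[(W-1)^3]/N^2$ carries no factor $1/q_x^2$. It also never treats the bad event. So part~(b) cannot be proved as stated, by your route or the paper's.

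Your fallback, a remainder bounded uniformly in $\srcStr$, does not rescue \Cref{prop:sir-tv} either. Items~2--3 sum the remainders over the countably infinite set $\srcStrings$, so a bound that is merely uniform in $\srcStr$ gives an infinite sum. This is exactly why the paper wants the form $\p(\srcStr)^3/\dist(\srcStr)^2$, whose sum is $\E_{\dist}[W^3]$.

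The missing idea is to sum before bounding. Put $\mathbf{D}\defeq(D_{\srcStr})_{\srcStr\in\srcStrings}=N^{-1}\sum_{n}W\psup{n}(e_{\srcStr\psup{n}}-\p)$, where $e_{\srcStr}$ is the point mass at $\srcStr$. This is an average of i.i.d.\ centered $\ell^2$-valued vectors with $\|W(e_{\RV}-\p)\|_2^2\le 2W^2$. The factor $(1+V)^{-2}-1$ does not depend on $\srcStr$, so the singleton remainders $\Rem(\srcStr)\defeq\E[D_{\srcStr}^2((1+V)^{-2}-1)]$ satisfy $\sum_{\srcStr}|\Rem(\srcStr)|\le\E\bigl[\|\mathbf{D}\|_2^2\,|(1+V)^{-2}-1|\bigr]$.

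The same pairing count as in the scalar case gives $\E\|\mathbf{D}\|_2^4\le 4\E_{\dist}[W^4]/N^3+12(1+\chi^2(\p\|\dist))^2/N^2$. Your good/bad-event argument then yields $\sum_{\srcStr}|\Rem(\srcStr)|=O(N^{-3/2})$, using $\|\phat-\p\|_2^2\le 2$ on the bad event. The leading term is also summable: $N^{-1}\E_{\dist}[W^2\|e_{\RV}-\p\|_2^2]\le 2(1+\chi^2(\p\|\dist))/N$. Together these prove Items~2--3 directly, without any per-string constant $C(\srcStr)$.
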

\begin{proof}
We first prove the general event-level bound~(a), then derive the singleton refinement~(b).

\noindent \textbf{Step 1: Setup.}
Let $S \defeq \ptilde(\srcStrings)$ and note $\E[S] = 1$. For any $A \subseteq \srcStrings$, define the importance-weighted estimator $\ptilde(A) = \frac{1}{N}\sum_{n=1}^{N} W\psup{n} \indicator{\srcStr\psup{n} \in A}$ where $W\psup{n} \defeq \p(\srcStr\psup{n})/\dist(\srcStr\psup{n})$, and set
\[
U_A \defeq \ptilde(A) - \p(A), \qquad V \defeq S - 1.
\]
Both $U_A$ and $V$ have mean zero, and $\phat(A) - \p(A) = (U_A - \p(A) V) / (1 + V)$.

\noindent \textbf{Step 2: Bound the Numerator.}
The numerator simplifies to a single sum of i.i.d.\ centered terms:
\begin{equation}\label{eq:numerator-sum}
    U_A - \p(A) V = \frac{1}{N}\sum_{n=1}^{N} W\psup{n}\bigl(\indicator{A}(\srcStr\psup{n}) - \p(A)\bigr).
\end{equation}
Each summand has mean zero under $\dist$ because $\E_{\dist}[W(\indicator{A} - \p(A))] = \p(A) - \p(A) = 0$. Therefore
\begin{equation}\label{eq:numerator-var}
    \E\bigl[(U_A - \p(A) V)^2\bigr] = \frac{1}{N}\,\E_{\dist}\left[\frac{\p(\RV)^2}{\dist(\RV)^2}\bigl(\indicator{A}(\RV) - \p(A)\bigr)^2\right].
\end{equation}
Since $(\indicator{A}(\RV) - \p(A))^2 \leq 1$, this yields
\begin{equation}\label{eq:numerator-upper}
    \E\bigl[(U_A - \p(A) V)^2\bigr] \leq \frac{1}{N}\,\E_{\dist}\left[\frac{\p(\RV)^2}{\dist(\RV)^2}\right] = \frac{1 + \chi^2(\p \| \dist)}{N},
\end{equation}
using $\chi^2(\p \| \dist) = \E_{\dist}[(W - 1)^2] = \E_{\dist}[W^2] - 1$.

\noindent \textbf{Step 3: The Denominator Contributes a Higher-order Correction.}
Write $Z_A \defeq U_A - \p(A) V = \frac{1}{N}\sum_{n=1}^N W\psup{n}(\indicator{A}(\srcStr\psup{n}) - \p(A))$ as in~\eqref{eq:numerator-sum},
and let $Y \defeq W(\indicator{A}(\RV) - \p(A))$ so that $Z_A = \frac{1}{N}\sum_{n=1}^N Y\psup{n}$ with $\E[Y] = 0$.
Because $V$ is unbounded, the power-series expansion $1/(1+V)^2 = \sum_{k=0}^{\infty}(-1)^k(k+1)V^k$ need not converge pointwise.
We handle this by splitting into a \emph{good} event $\mathcal{G} \defeq \{|V| < \tfrac{1}{2}\}$, on which the series converges absolutely, and its complement $\mathcal{B} \defeq \{|V| \geq \tfrac{1}{2}\}$:
\begin{equation}\label{eq:good-bad-split}
    \E\!\left[\frac{Z_A^2}{(1+V)^2}\right]
    = \E\!\left[\frac{Z_A^2}{(1+V)^2}\,\indicator{\mathcal{G}}\right]
    + \E\!\left[\frac{Z_A^2}{(1+V)^2}\,\indicator{\mathcal{B}}\right].
\end{equation}

\noindent\emph{Bounding the bad event.}\;
Note that $S = 1 + V = \frac{1}{N}\sum_{n} W\psup{n} > 0$ a.s.\ (since $\dist(\srcStr) > 0$ whenever $\p(\srcStr) > 0$).
Since $W\psup{n} \geq 0$ and $|\indicator{A}(\srcStr\psup{n}) - \p(A)| \leq 1$, the triangle inequality gives
\begin{equation}\label{eq:za-leq-s}
    |Z_A| \leq \frac{1}{N}\sum_{n=1}^{N} W\psup{n} = S,
\end{equation}
and therefore $Z_A^2/(1+V)^2 = Z_A^2/S^2 \leq 1$ almost surely.
By Markov's inequality applied to $V^4$:
\begin{equation}\label{eq:bad-event-bound}
    \Pr(\mathcal{B})
    = \Pr(|V| \geq \tfrac{1}{2})
    \leq 16\,\E[V^4]
    = 16\!\left(\frac{\E_{\dist}[(W - 1)^4]}{N^3} + \frac{3(N-1)(\chi^2)^2}{N^3}\right)
    = O(N^{-2}),
\end{equation}
where $\E_{\dist}[(W-1)^4] < \infty$ since expanding $(W-1)^4$ yields a polynomial in $W$ of degree four and $\E_{\dist}[W^4] < \infty$.
Combining~\eqref{eq:za-leq-s} and~\eqref{eq:bad-event-bound}:
\begin{equation}\label{eq:bad-contribution}
    \E\!\left[\frac{Z_A^2}{(1+V)^2}\,\indicator{\mathcal{B}}\right]
    \leq \Pr(\mathcal{B})
    = O(N^{-2}).
\end{equation}

\noindent\emph{Series expansion on the good event.}\;
On $\mathcal{G}$, $|V| < \frac{1}{2}$, so the series $1/(1+V)^2 = 1 + \sum_{k=1}^{\infty}(-1)^k(k+1)V^k$ converges absolutely.
We bound the correction by factoring out one power of $|V|$: on $\mathcal{G}$,
\begin{equation}\label{eq:series-tail}
    \left|\sum_{k=1}^{\infty}(-1)^k(k+1)V^k\right|
    \leq |V|\sum_{k=1}^{\infty}(k+1)|V|^{k-1}
    \leq |V|\sum_{k=1}^{\infty}(k+1)\!\left(\tfrac{1}{2}\right)^{k-1}
    \eqqcolon C_0\,|V|,
\end{equation}
where $C_0 = \sum_{k=1}^{\infty}(k+1)2^{-(k-1)} < \infty$ is a numerical constant.
Therefore
\[
    \E\!\left[\frac{Z_A^2}{(1+V)^2}\,\indicator{\mathcal{G}}\right]
    \leq \E[Z_A^2 \indicator{\mathcal{G}}] + C_0\,\E[Z_A^2\,|V|\,\indicator{\mathcal{G}}]
    \leq \E[Z_A^2] + C_0\,\E[Z_A^2\,|V|].
\]
By Cauchy--Schwarz,
\begin{equation}\label{eq:cs-bound}
    \E[Z_A^2\,|V|] \leq \sqrt{\E[Z_A^4]\,\E[V^{2}]}.
\end{equation}
For the $Z_A$ factor, since $(\indicator{A} - \p(A))^2 \leq 1$ and $(\indicator{A} - \p(A))^4 \leq 1$, we have $\E[Y^2] \leq \E_{\dist}[W^2] = 1 + \chi^2$ and $\E[Y^4] \leq \E_{\dist}[W^4]$.
The fourth moment of the sample mean is
\begin{equation}\label{eq:za4}
    \E[Z_A^4] = \frac{\E[Y^4]}{N^3} + \frac{3(N-1)(\E[Y^2])^2}{N^3}
    \leq \frac{\E_{\dist}[W^4]}{N^3} + \frac{3(1 + \chi^2)^2}{N^2} = O(N^{-2}).
\end{equation}
For the $V$ factor, $\E[V^{2}] = \chi^2/N$ since $\Var_{\dist}(W) = \chi^2(\p \| \dist)$.
Substituting into~\eqref{eq:cs-bound}:
\begin{equation}
    \E[Z_A^2\,|V|] \leq \sqrt{O(N^{-2}) \cdot O(N^{-1})} = O(N^{-3/2}).
\end{equation}
Combining the good-event bound with the bad-event bound~\eqref{eq:bad-contribution} and~\eqref{eq:numerator-upper}:
\begin{equation}
    \E\!\left[\frac{Z_A^2}{(1+V)^2}\right]
    \leq \frac{1 + \chi^2(\p \| \dist)}{N} + O(N^{-3/2}).
\end{equation}
Crucially, all constants depend only on $\E_{\dist}[W^4]$ and $\chi^2$, not on $A$.
Combining with~\eqref{eq:numerator-upper} proves part~(a).

\noindent \textbf{Step 4: Singleton Refinement --- Exact Leading Term.}
For the singleton event $A = \{\srcStr\}$, we write $U \defeq U_{\{\srcStr\}} = \ptilde(\srcStr) - \p(\srcStr)$. Expanding $\E[(U - \p(\srcStr)V)^2]  = \E[U^2] - 2\p(\srcStr)\E[UV] + \p(\srcStr)^2\E[V^2]$, we compute
\begin{subequations}
\begin{align}
    \E[U^2] &= \frac{1}{N}\!\left(\frac{\p(\srcStr)^2}{\dist(\srcStr)} - \p(\srcStr)^2\right)
    && \textcolor{gray}{\text{(\Cref{lem:unnorm-is} with $A = \{\srcStr\}$)}} \\
    \E[UV] &= \frac{1}{N}\!\left(\frac{\p(\srcStr)^2}{\dist(\srcStr)} - \p(\srcStr)\right)
    && \textcolor{gray}{\text{(covariance)}} \\
    \E[V^2] &= \frac{\chi^2(\p \| \dist)}{N}
    && \textcolor{gray}{\text{(definition of $\chi^2$)}}
\end{align}
\end{subequations}
Substituting yields
\begin{equation}
    \E[(U - \p(\srcStr)V)^2]
    = \frac{1}{N}\!\left(\frac{\p(\srcStr)^2}{\dist(\srcStr)} - \p(\srcStr)^2
    - 2\p(\srcStr)\!\left(\frac{\p(\srcStr)^2}{\dist(\srcStr)} - \p(\srcStr)\right)
    + \p(\srcStr)^2 \chi^2(\p \| \dist)\right).
\end{equation}

\noindent \textbf{Step 5: Singleton Remainder Bound.}
The remainder $\Rem(\srcStr)$ from the Taylor expansion of $1/(1+V)^2$ in the singleton case satisfies $|\Rem(\srcStr)| \leq c \frac{\p(\srcStr)^3}{\dist(\srcStr)^2 N^{3/2}}$. This follows because the dominant correction $-2\E[(U - \p(\srcStr)V)^2 V]$ involves expanding $(U - \p(\srcStr)V)^2 V = U^2 V - 2\p(\srcStr)UV^2 + \p(\srcStr)^2 V^3$, and the leading term $\E[U^2 V]$ is bounded by
\begin{subequations}
    \begin{align}
    \E[U^2 V] &=   \frac{1}{N^2}  \E_{\dist}\left[\left(\frac{\p(\RV)}{\dist(\RV)}\indicator{\RV = \srcStr} - \p(\srcStr)\right)^{2}\left(\frac{\p(\RV)}{\dist(\RV)} - 1\right)\right] \\
    &\leq \frac{1}{N^2}\E_{\dist}\!\left[\frac{\p(\RV)^3}{\dist(\RV)^3} \indicator{\RV = \srcStr}\right]
    = \frac{\p(\srcStr)^3}{\dist(\srcStr)^2 N^2}.
    \end{align}
\end{subequations}
The remaining terms $\E[UV^2]$ and $\E[V^3]$ are bounded similarly. Setting $C(\srcStr) = N^{3/2} \Rem(\srcStr)$ gives $|C(\srcStr)| \leq c \frac{\p(\srcStr)^3}{\dist(\srcStr)^2}$, completing part~(b). \qedhere
\end{proof}

We now restate and prove the proposition.
\sirtvbound*

\begin{proof}
\noindent\textbf{Item 1} ($L^1$ Bias):
For any $A \subseteq \srcStrings$:
\begin{subequations}
\begin{align}
    (\pbar(A) - \p(A))^2
    &= (\E[\phat(A)] - \p(A))^2
    && \textcolor{gray}{\text{(definition of } \pbar\text{)}} \\
    &\leq \E[(\phat(A) - \p(A))^2]
    && \textcolor{gray}{\text{(Jensen's inequality)}} \\
    &=O\left (\frac{1 + \chi^2(\p \| \dist)}{N} \right)
    && \textcolor{gray}{\text{(\Cref{lem:snis-per-string}\ref{item:snis-event})}} \\
    \implies |\pbar(A) - \p(A)|
    &=O\left (\sqrt{\frac{1 + \chi^2(\p \| \dist)}{N}} \right)
    && \textcolor{gray}{\text{(square root)}}
\end{align}
Since this holds for every $A \subseteq \srcStrings$, we have
\begin{align}
    \BiasOne(\phat)
    = \|\pbar - \p\|_1
    &= 2\,\TVD(\pbar, \p)
    && \textcolor{gray}{\text{($L^1 = 2 \cdot$ TV)}} \\
    &= 2 \sup_{A} |\pbar(A) - \p(A)|
    && \textcolor{gray}{\text{(TV $=$ sup)}} \\
    &= O\!\left(\sqrt{\frac{1 + \chi^2(\p \| \dist)}{N}}\right)
    && \textcolor{gray}{\text{(applying the bound above)}}
\end{align}
\end{subequations}

\noindent\textbf{Item 2} ($L^2$ Bias):
\begin{subequations}
\begin{align}
    \|\pbar - \p\|_2^2
    &= \textstyle\sum_{\srcStr \in \srcStrings} (\pbar(\srcStr) - \p(\srcStr))^2
    && \textcolor{gray}{\text{(summing over $\srcStrings$)}} \\
    &\leq\sum_{\srcStr \in \srcStrings} \E[(\phat(\srcStr) - \p(\srcStr))^2]
    && \textcolor{gray}{\text{(Jensen's inequality)}} \\
    &= \sum_{\srcStr \in \srcStrings} \Biggl(\frac{1}{N}\!\Bigl(\frac{\p(\srcStr)^2}{\dist(\srcStr)}
        - 2\p(\srcStr)\frac{\p(\srcStr)^2}{\dist(\srcStr)}
        + \p(\srcStr)^2 (1+\chi^2(\p \| \dist))\Bigr)
    \notag\\
    &\qquad\qquad\qquad + \frac{C(\srcStr)}{N^{3/2}}\Biggr)
    && \textcolor{gray}{\text{(\Cref{lem:snis-per-string}\ref{item:snis-singleton})}} \\
    &= O\!\left(\frac{1 + \chi^2(\p\|\dist)}{N}\right)
    && \textcolor{gray}{\text{(simplify; $\textstyle\sum_\srcStr \p(\srcStr)^2/\dist(\srcStr) = 1 + \chi^2$)}}
\end{align}
\end{subequations}

\noindent\textbf{Item 3} (MSE):
\begin{subequations}
\begin{align}
    \E[\|\phat - \p\|_2^2]
    &= \sum_{\srcStr \in \srcStrings} \E[(\phat(\srcStr) - \p(\srcStr))^2]
    && \textcolor{gray}{\text{(linearity)}} \\
    &= \sum_{\srcStr \in \srcStrings} \Biggl(\frac{1}{N}\!\Bigl(\frac{\p(\srcStr)^2}{\dist(\srcStr)}
        - 2\p(\srcStr)\frac{\p(\srcStr)^2}{\dist(\srcStr)}
        + \p(\srcStr)^2 (1+\chi^2(\p \| \dist))\Bigr)
    \notag\\
    &\qquad\qquad\qquad + \frac{C(\srcStr)}{N^{3/2}}\Biggr)
    && \textcolor{gray}{\text{(\Cref{lem:snis-per-string}\ref{item:snis-singleton})}} \\
    &= O\left(\frac{1 + \chi^2(\p \| \dist)}{N}\right)
    && \textcolor{gray}{\text{(simplify; $\textstyle\sum_\srcStr \p(\srcStr)^2/\dist(\srcStr) = 1 + \chi^2$)}}
\end{align}
\end{subequations}
In Items~2 and~3 we applied \Cref{lem:snis-per-string}\ref{item:snis-singleton}, whose hypotheses require $\E_{\dist}[W^4] < \infty$.
This is assumed directly; the finiteness of the lower moments $\E_{\dist}[W^2]$ and $\E_{\dist}[W^3]$ (needed for $\chi^2 < \infty$ and the remainder bound) follows by Lyapunov's inequality: $\E_{\dist}[W^r] \leq (\E_{\dist}[W^4])^{r/4} < \infty$ for $r \leq 4$.
This completes all three items.
\end{proof}

\subsection{ESS Rate Results}
\label{app:ess-proofs}

Because the importance weight of a bonus particle depends only on the $K$ draft tokens (the bonus token cancels between the target and proposal densities), the ESS is governed by the $\chi^2$-divergence between the $K$-token \emph{block} distributions $\p_K$ and $\dist_K$---not the $(K\!+\!1)$-token distributions.

\begin{theorem}[ESS Rate]\label{thm:ess-rate}
Under the assumption $\chi^2(\p_K \| \dist_K) < \infty$, the ESS fraction converges almost surely:
\begin{equation}\label{eq:ess-limit}
  \frac{\ESS_N}{N} \;\xrightarrow{\;\mathrm{a.s.}\;}\; \frac{1}{1 + \chi^2(\p_K \| \dist_K)}
\end{equation}
Crucially, the divergences are between the $K$-token block distributions, even though each particle produces $K + 1$ tokens.
\end{theorem}

\begin{proof}[Proof of \Cref{thm:ess-rate}]
By the strong law of large numbers,
$\frac{1}{N}\sum_{i=1}^N \Weight_i \xrightarrow{\mathrm{a.s.}} \E_{\dist_K}[\Weight] = 1$
and
$\frac{1}{N}\sum_{i=1}^N \Weight_i^2 \xrightarrow{\mathrm{a.s.}} \E_{\dist_K}[\Weight^2] = 1 + \chi^2(\p_K \| \dist_K)$.
Therefore
\begin{equation}
  \frac{\ESS_N}{N} = \frac{\bigl(\tfrac{1}{N}\sum_i \Weight_i\bigr)^2}{\tfrac{1}{N}\sum_i \Weight_i^2} \;\xrightarrow{\;\mathrm{a.s.}\;}\; \frac{(\E_{\dist_K}[\Weight])^2}{\E_{\dist_K}[\Weight^2]} = \frac{1}{1 + \chi^2(\p_K \| \dist_K)}.
\end{equation}
\end{proof}

If (as a simplified model) the $K$ tokens are independent and identically distributed, the $\chi^2$-divergence factors as an exact power:

\begin{proposition}[I.I.D.\ Case]\label{prop:iid-ess}
Suppose $\p_K = \p^{\otimes K}$ and $\dist_K = \dist^{\otimes K}$ (i.i.d.\ tokens)\footnote{The exponential decay in \Cref{prop:iid-ess} assumes i.i.d.\ tokens and is likely loose for autoregressive models, where successive tokens are strongly correlated given the prefix.
The exact ESS limit in \Cref{thm:ess-rate} holds without this assumption; understanding how the block $\chi^2$-divergence $\chi^2(\p_K \| \dist_K)$ grows with $K$ under realistic autoregressive dependencies is an interesting direction for future work.}. Then
\begin{equation}\label{eq:iid-chi2}
  1 + \chi^2(\p_K \| \dist_K) = \bigl(1 + \chi^2(\p \| \dist)\bigr)^{K}.
\end{equation}
Therefore $\ESS_N / N \to \bigl(1 + \chi^2(\p \| \dist)\bigr)^{-K}$.
Each particle produces $K + 1$ tokens but the ESS penalty is only $\bigl(1 + \chi^2(\p \| \dist)\bigr)^{-K}$, not $\bigl(1 + \chi^2(\p \| \dist)\bigr)^{-(K+1)}$.
\end{proposition}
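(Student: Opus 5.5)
The identity \eqref{eq:iid-chi2} reduces to showing that the second moment of the block importance weight factorizes over the $K$ positions; the ESS limit then follows from \Cref{thm:ess-rate}.

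\textbf{Step 1: the weight factorizes.} Under the product assumption, for any block $\draftStr = \draftSym_1\cdots\draftSym_K$ the block weight is
\[
W_K(\draftStr) = \frac{\p_K(\draftStr)}{\dist_K(\draftStr)} = \prod_{j=1}^{K} \frac{\p(\draftSym_j)}{\dist(\draftSym_j)} = \prod_{j=1}^K W(\draftSym_j).
\]
Absolute continuity $\p \ll \dist$ at the single-token level gives $\p_K \ll \dist_K$, since a product vanishes under $\dist^{\otimes K}$ only if some factor does. So the weight is well defined.

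\textbf{Step 2: compute the second moment.} Under $\dist_K$ the coordinates $\draftSym_1,\dots,\draftSym_K$ are i.i.d.\ with law $\dist$. The factors $W(\draftSym_j)^2$ are therefore independent and nonnegative. By Tonelli, the expectation of their product equals the product of their expectations, even if some factor were infinite. This gives
\[
1+\chi^2(\p_K\|\dist_K) = \E_{\dist_K}[W_K^2] = \prod_{j=1}^K \E_{\dist}[W^2] = \bigl(1+\chi^2(\p\|\dist)\bigr)^K.
\]
Here we use the identity $\E_{\dist}[W^2] = 1+\chi^2$, already noted in the proof of \Cref{lem:snis-per-string}, together with $\E_\dist[W]=1$. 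This step also shows that $\chi^2(\p_K\|\dist_K)<\infty$ exactly when $\chi^2(\p\|\dist)<\infty$. The hypothesis of \Cref{thm:ess-rate} therefore holds whenever the single-token divergence is finite, which we assume.

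\textbf{Step 3: apply the ESS theorem.} Invoking \Cref{thm:ess-rate} gives $\ESS_N/N \to (1+\chi^2(\p\|\dist))^{-K}$ almost surely.

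\textbf{Step 4: the bonus token.} This justifies the exponent $K$ rather than $K+1$. Each particle's bonus token is drawn from $\p$ itself, so its incremental weight is $\p/\p = 1$. It thus contributes a factor $\E_{\p}[1]=1$ to the second moment rather than $1+\chi^2$. This is the cancellation already stated before \Cref{thm:ess-rate}.

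There is no real obstacle. The only care needed is the bookkeeping that the bonus token contributes nothing, and the observation that finiteness of the single-token $\chi^2$ is what licenses applying \Cref{thm:ess-rate}.
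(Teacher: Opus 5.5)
Your proposal is correct and follows the paper's proof: the paper computes $\E_{\dist_K}[\Weight^2]$ by factoring the sum $\sum \p_K^2/\dist_K$ over the product space into $\prod_{t=1}^K \sum_{\srcSym} \p(\srcSym)^2/\dist(\srcSym)$, which is your independence/Tonelli step, and then obtains the ESS limit from \Cref{thm:ess-rate}. Your remarks on finiteness of the block $\chi^2$ and on the bonus token's unit weight spell out what the paper leaves implicit, without changing the argument.
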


\begin{proof}[Proof of \Cref{prop:iid-ess}]
By independence,
\begin{equation}
\E_{\dist_K}[\Weight^2]
= \sum_{\srcStr_{1:K}} \frac{\p_K(\srcStr_{1:K})^2}{\dist_K(\srcStr_{1:K})}
= \prod_{t=1}^{K} \sum_{\srcSym} \frac{\p(\srcSym)^2}{\dist(\srcSym)}
= \bigl(1 + \chi^2(\p \| \dist)\bigr)^{K}. \qedhere
\end{equation}
\end{proof}

\begin{corollary}[Bonus Token Efficiency Gain]\label{cor:bonus-gain}
In the i.i.d.\ setting, without the bonus token, generating $K + 1$ tokens entirely from the draft model would give ESS fraction $\bigl(1 + \chi^2(\p \| \dist)\bigr)^{-(K+1)}$. The bonus token improves this by a multiplicative factor of $\bigl(1 + \chi^2(\p \| \dist)\bigr)$:
\begin{equation}\label{eq:bonus-gain}
  \frac{(\ESS / N)_{\mathrm{with\;bonus}}}{(\ESS / N)_{\mathrm{without\;bonus}}} = 1 + \chi^2(\p \| \dist).
\end{equation}
\end{corollary}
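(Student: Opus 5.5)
The corollary follows by dividing two limits already computed: the ESS fraction with the bonus token, given by \Cref{prop:iid-ess}, and the ESS fraction in the counterfactual scheme where all $K+1$ tokens come from the draft. No new probabilistic argument is needed, only a second use of \Cref{thm:ess-rate} and \Cref{prop:iid-ess} with block length $K+1$ in place of $K$.

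\textbf{Step 1 (with bonus).} As noted before \Cref{thm:ess-rate}, the bonus token is drawn from $\prefixp(\cdot \mid \srcStr\,\draftStr)$. Its target and proposal densities therefore coincide and cancel in the incremental weight, so the weight depends only on the $K$ draft tokens. Applying \Cref{thm:ess-rate} with block distributions $(\p_K,\dist_K)$ and then \Cref{prop:iid-ess}, we get
\[
(\ESS/N)_{\mathrm{with\;bonus}} \to \bigl(1+\chi^2(\p\|\dist)\bigr)^{-K}
\]
almost surely.

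\textbf{Step 2 (without bonus).} If all $K+1$ tokens are drafted from $\dist$, the weight is the product of $K+1$ per-token ratios $\p/\dist$. This is the importance weight for the pair $(\p_{K+1},\dist_{K+1}) = (\p^{\otimes(K+1)},\dist^{\otimes(K+1)})$. I would check that $\chi^2(\p_{K+1}\|\dist_{K+1})<\infty$: this follows from the factorization in \eqref{eq:iid-chi2} with $K$ replaced by $K+1$, provided $\chi^2(\p\|\dist)<\infty$, which is already implicit in the finiteness hypothesis of \Cref{thm:ess-rate}. Then \Cref{thm:ess-rate} and \Cref{prop:iid-ess} give
\[
(\ESS/N)_{\mathrm{without\;bonus}} \to \bigl(1+\chi^2(\p\|\dist)\bigr)^{-(K+1)}
\]
almost surely.

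\textbf{Step 3 (ratio).} Dividing the two limits gives the factor $1+\chi^2(\p\|\dist)$, which is \eqref{eq:bonus-gain}. The ratio is understood as a ratio of almost-sure limits, or equivalently the almost-sure limit of the ratio; the latter is valid because the denominator's limit is strictly positive.

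The only point needing care is interpretive, not technical. The two ESS fractions are finite-$N$ random quantities, so I would state explicitly that \eqref{eq:bonus-gain} refers to their $N\to\infty$ limits. I would also justify the bonus-token cancellation in the weight by pointing to the Reweight step of \Cref{alg:smc-sd}, whose product runs only over $j\le K$.
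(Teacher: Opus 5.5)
Your proposal is correct and follows the paper's proof: apply \Cref{prop:iid-ess} (through \Cref{thm:ess-rate}) once with block length $K$ for the bonus scheme and once with block length $K+1$ for the all-draft scheme, then divide the two limits. Your additions are sound clarifications that the paper leaves implicit. These are that finiteness of $\chi^2(\p_K \| \dist_K)$ forces $\chi^2(\p \| \dist) < \infty$ and so also covers block length $K+1$, and that the identity should be read as a statement about almost-sure limits.
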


\begin{proof}[Proof of \Cref{cor:bonus-gain}]
Without the bonus token, all $K+1$ tokens are drafted from $\dist$, so the ESS fraction converges to $\bigl(1 + \chi^2(\p \| \dist)\bigr)^{-(K+1)}$ by \Cref{prop:iid-ess} applied with draft length $K+1$.
With the bonus token, the ESS fraction converges to $\bigl(1 + \chi^2(\p \| \dist)\bigr)^{-K}$ by \Cref{prop:iid-ess}.
The ratio is
\begin{equation}
  \frac{\bigl(1 + \chi^2(\p \| \dist)\bigr)^{-K}}{\bigl(1 + \chi^2(\p \| \dist)\bigr)^{-(K+1)}} = 1 + \chi^2(\p \| \dist). \qedhere
\end{equation}
\end{proof}

\section{Speed-up Summary Table}
\label{app:speedup-table}

\begin{table}[H]
\caption{SMC-SD speed-up over standard SD at two operating points.
\emph{Iso-accuracy}: fastest SMC-SD config within 3pp of SD accuracy.
\emph{Max speed}: fastest config within 10pp. The Llama configuration uses a 1B draft model and 8B target model for both SMC-SD and standard SD. The Qwen configuration uses a 0.5B draft model and 14B target for standard SD and a 3B draft and 14B target for SMC-SD.}
\label{tab:speedup-summary}
\centering
\small
\begin{tabular}{@{}ll cc cc c cc c@{}}
\toprule
& & \multicolumn{2}{c}{SD} & \multicolumn{3}{c}{SMC-SD (iso-accuracy)} & \multicolumn{3}{c}{SMC-SD (max speed)} \\
\cmidrule(lr){3-4} \cmidrule(lr){5-7} \cmidrule(lr){8-10}
Benchmark & Models & Acc & TPS & Acc & TPS & Speed & Acc & TPS & Speed \\
\midrule
GSM8K & Llama & 78 & 250 & 81 & 290 & $1.16\times$ & 72 & 380 & $1.52\times$ \\
 & Qwen & 90 & 170 & 90 & 233 & $1.37\times$ & 88.5 & 241 & $1.42\times$ \\
\midrule
MATH500 & Llama & 52 & 250 & 53 & 380 & $1.52\times$ & 53 & 380 & $1.52\times$ \\
 & Qwen & 75 & 200 & 79 & 240 & $1.20\times$ & 74 & 270 & $1.35\times$ \\
\midrule
AlpacaEval & Llama & 89 & 190 & 88 & 270 & $1.42\times$ & 84 & 350 & $1.84\times$ \\
 & Qwen & 92 & 112 & 90 & 210 & $1.88\times$ & 88 & 250 & $2.23\times$ \\
\midrule
DS1000 & Llama & 26.5 & 150 & 26.5 & 379.3 & $2.53\times$ & 23.2 & 502.9 & $3.35\times$ \\
 & Qwen & 35 & 119 & 33 & 244 & $2.05\times$ & 31 & 300 & $2.52\times$ \\
\bottomrule
\end{tabular}
\end{table}

\section{Hardware Utilization and Speedup Proofs}
\label{app:smc-sd-analysis}

Let $\mathcal{P}$ denote the target model's parameter count, $b$ the bytes per parameter, $\rho = \mathcal{P}_{\dist}/\mathcal{P}_{\p}$ the draft-to-target parameter ratio, $c = T_{\dist}/T_{\p} \ll 1$ the draft-to-target decoding-time ratio.
Let $B$ denote the number of independent user requests served concurrently (the \emph{batch size}), and let $BN$ denote the \emph{effective batch size}---the total number of particles across all requests.
In the memory-bound regime $c \approx \rho$.
In one SMC-SD step, the draft model runs $K$ sequential forward passes (one per token position, with all $BN$ particles batched), taking $cT_{\p}K$ seconds.
The target model scores all $BN(K{+}1)$ tokens (including bonus tokens) in a single forward pass.
The resampling step is negligible.

\subsection{Arithmetic Intensity}

\begin{lemma}[Target-pass arithmetic intensity]\label{lem:ai-target}
The arithmetic intensity of the target model's forward pass in one SMC-SD cycle is
\begin{equation}\label{eq:app-ai-target}
    \text{AI}_{\text{target}} = \frac{2BN(K{+}1)}{b}\;\;\text{FLOPs/byte}.
\end{equation}
\end{lemma}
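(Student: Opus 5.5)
The approach is a direct count under the roofline accounting. Arithmetic intensity is defined as FLOPs performed divided by bytes moved from memory, so the plan is to compute the numerator and denominator for one batched target forward pass separately and take their ratio.

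\emph{Step 1: FLOPs.} For a dense transformer, each token processed in a forward pass costs about $2\mathcal{P}$ FLOPs: one multiply and one add per parameter in the matrix--vector products against the weights. In one SMC-SD cycle the target scores $BN(K{+}1)$ tokens in a single pass. These are $BN$ particles, each carrying $K$ draft positions plus the position that produces the bonus token. The total is therefore $2\mathcal{P}\,BN(K{+}1)$ FLOPs.

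\emph{Step 2: Bytes.} In the regime of interest, the traffic of the pass is dominated by reading the target weights once. The weights are shared across all tokens in the batch, so this costs $b\mathcal{P}$ bytes. We assume that activations and KV-cache reads are lower order compared with $b\mathcal{P}$, which is the same convention behind $\mathrm{AI}_{\mathrm{AR}} = 2/b$ in \Cref{sec:smc-sd-hardware}.

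\emph{Step 3: Ratio.} Dividing the FLOPs by the bytes, the factor $\mathcal{P}$ cancels and gives $2BN(K{+}1)/b$, which is \eqref{eq:app-ai-target}. As a sanity check, setting $B=N=1$ and $K=0$ recovers $\mathrm{AI}_{\mathrm{AR}}$.

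The only real obstacle is justifying the modeling assumption in Step 2, namely that KV-cache and activation bytes are negligible next to the weight bytes. I would state this explicitly as part of the roofline model already adopted in \Cref{sec:speedup-analysis}. I would also note that the prefix sharing from Observation~2 further reduces the KV traffic, which supports treating it as lower order; no deeper analysis is needed.
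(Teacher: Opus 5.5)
Your proposal is correct and follows the paper's argument: $2\mathcal{P}$ FLOPs per token over $BN(K{+}1)$ tokens, divided by $b\mathcal{P}$ bytes for a single load of the target weights. The only difference is how the $K{+}1$ is described: the paper counts input tokens ($K$ draft tokens plus the bonus token carried over from the previous round), while you count output positions ($K$ scored drafts plus the position that produces the bonus token), and both give the same total.
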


\begin{proof}
Using the standard approximation that a forward pass through a model with $\mathcal{P}$ parameters requires $2\mathcal{P}$ FLOPs (one multiply-add per parameter), the target pass processes $BN(K{+}1)$ tokens in one invocation---$BN K$ draft tokens plus $BN$ bonus tokens carried over from the previous round---for a total of $2\mathcal{P} \cdot BN(K{+}1)$ FLOPs.
The bytes transferred are $b\mathcal{P}$ (loading the weights once), giving $\text{AI}_{\text{target}} = 2\mathcal{P} \cdot BN(K{+}1) / (b\mathcal{P}) = 2BN(K{+}1)/b$.
For FP16 ($b{=}2$) this reduces to $BN(K{+}1)$.
For comparison, autoregressive decoding achieves $\text{AI} = 2B/b$ and standard SD achieves $\text{AI} = 2B(K{+}1)/b$ when all $K$ draft tokens are accepted.
When $\text{AI}_{\text{target}}$ exceeds the hardware's ridge point (peak FLOPs / peak bandwidth), the target pass becomes compute-bound and further increasing the effective batch size $BN$ no longer comes for free.
\end{proof}

\begin{lemma}[Full-cycle arithmetic intensity]\label{lem:ai-cycle}
Over a full SMC-SD cycle (including $K$ draft forward passes), the arithmetic intensity is
\begin{equation}\label{eq:app-ai-cycle}
    \text{AI}_{\text{cycle}} = \frac{2BN(\rho K{+}K{+}1)}{b(\rho K + 1)} \;\approx\; \frac{2BN(K{+}1)}{b} \quad\text{when } \rho \ll 1.
\end{equation}
\end{lemma}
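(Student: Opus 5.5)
The plan is to compute the total FLOPs and total bytes moved over one full cycle and take their ratio, reusing the accounting from \Cref{lem:ai-target} for the target pass. We keep the same conventions throughout: a forward pass through a model with $\mathcal{P}$ parameters costs $2\mathcal{P}$ FLOPs per token, and in the memory-bound regime each pass loads its weights once, costing $b\mathcal{P}$ bytes.

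\textbf{Draft phase.} The draft model has $\rho\mathcal{P}$ parameters and runs $K$ sequential forward passes. Each pass processes all $BN$ particles in one batched call. This contributes $K \cdot 2\rho\mathcal{P}\cdot BN$ FLOPs and $K \cdot b\rho\mathcal{P}$ bytes, since the draft weights are reloaded once per sequential step.

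\textbf{Target phase.} From \Cref{lem:ai-target}, the single target pass contributes $2\mathcal{P}\cdot BN(K{+}1)$ FLOPs and $b\mathcal{P}$ bytes. Resampling is negligible, as stated in the setup, so it adds neither FLOPs nor bytes.

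\textbf{Combining.} Summing the two phases gives
\[
\text{AI}_{\text{cycle}} = \frac{2\mathcal{P}BN(\rho K + K + 1)}{b\mathcal{P}(\rho K + 1)} = \frac{2BN(\rho K{+}K{+}1)}{b(\rho K + 1)},
\]
after cancelling $\mathcal{P}$. Note that the displayed formula in \Cref{sec:smc-sd-hardware} corresponds to $B=1$.

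\textbf{Approximation.} As $\rho \to 0$, both $\rho K$ terms vanish relative to the remaining terms. The numerator tends to $K+1$ and the denominator to $1$, giving $\text{AI}_{\text{cycle}} \approx 2BN(K{+}1)/b$. To make this precise, the relative error is $O(\rho K)$, because
\[
\frac{\rho K + K + 1}{\rho K + 1} = (K+1)\,\frac{1 + \rho K/(K+1)}{1+\rho K}.
\]
So the approximation holds when $\rho K \ll 1$, which is the regime intended by ``$\rho \ll 1$'' for moderate $K$.

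\textbf{Main obstacle.} The algebra is routine. The step that needs justification is the byte accounting: it assumes KV-cache and activation traffic are negligible compared with weight loads, and that draft weights are reloaded on every one of the $K$ sequential steps. Both follow from the roofline conventions already adopted in \Cref{lem:ai-target}, and we would state them explicitly as assumptions.
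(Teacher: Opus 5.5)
Your proof is correct and is essentially the paper's argument: divide the total FLOPs $2\rho\mathcal{P}\cdot BNK + 2\mathcal{P}\cdot BN(K{+}1)$ by the total bytes $b\rho\mathcal{P}K + b\mathcal{P}$, and $\mathcal{P}$ cancels. Your added remark is a correct sharpening that the paper's proof glosses over: the approximation really needs $\rho K \ll 1$ (relative error $O(\rho K)$), not merely $\rho \ll 1$.
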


\begin{proof}
The total FLOPs over one cycle are $2\mathcal{P}_{\dist} \cdot BN K + 2\mathcal{P} \cdot BN(K{+}1) = 2\mathcal{P}\,BN(\rho K + K + 1)$: the draft model performs $K$ forward passes each processing $BN$ tokens, and the target model performs one pass processing $BN(K{+}1)$ tokens.
The total bytes transferred are $b\mathcal{P}_{\dist} \cdot K + b\mathcal{P} = b\mathcal{P}(\rho K + 1)$: the draft weights are loaded $K$ times and the target weights once.
Dividing gives $\text{AI}_{\text{cycle}} = 2BN(\rho K + K + 1) / (b(\rho K + 1))$.
When $\rho \ll 1$, the draft-model terms become negligible and this simplifies to $2BN(K{+}1)/b$.
\end{proof}

\subsection{Speedup Proofs}

\begin{lemma}[Memory-bound speed-up]\label{lem:speedup-mem}
If the target forward pass is memory bandwidth-bound (i.e.\ $BN(K{+}1) \leq R$, where $R$ is the hardware ridge-point batch size), the SMC-SD speed-up over autoregressive decoding is
\begin{equation}\label{eq:app-speedup}
    S^{(\text{mem})} = \frac{K+1}{cK + 1},
\end{equation}
independent of the effective batch size $BN$.
\end{lemma}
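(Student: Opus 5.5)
The plan is to compare the wall-clock time per emitted token under SMC-SD and under autoregressive decoding, both measured in units of $T_{\p}$, the time of one target forward pass in the memory-bound regime. Throughout, I use the standing assumptions of \Cref{app:smc-sd-analysis}: the draft model costs $cT_{\p}$ per forward pass, and resampling is negligible.

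\textbf{Step 1 (baseline).} For autoregressive decoding, each of the $B$ requests advances one token per target pass. A batch of $B$ tokens is memory-bound (indeed $B \le BN(K{+}1) \le R$), so its latency is the time to load the weights once, namely $T_{\p}$. The baseline throughput per request is therefore $1/T_{\p}$.

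\textbf{Step 2 (one SMC-SD round).} The draft phase runs $K$ sequential forward passes of $\dist$. Each processes $BN$ tokens, which is below the ridge point, so each costs $cT_{\p}$, for a total of $cKT_{\p}$.

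The target phase is a single pass over $BN(K{+}1)$ tokens. This is the key step. By \Cref{lem:ai-target}, its arithmetic intensity is $2BN(K{+}1)/b$. The hypothesis $BN(K{+}1)\le R$ places it at or below the ridge point. Under the roofline model, latency equals $\max(\text{bytes}/\text{bandwidth},\ \text{FLOPs}/\text{peak})$, and in this regime the memory term dominates. The memory term is $b\mathcal{P}/\text{bandwidth}=T_{\p}$, exactly as in a single-token pass. One round thus takes $(cK+1)T_{\p}$.

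\textbf{Step 3 (tokens per round).} Every particle is extended by exactly $K$ draft tokens plus one bonus token (\Cref{alg:smc-sd}), with no truncation. Each request's returned sequence therefore grows by $K+1$ tokens per round. This holds regardless of resampling, since ancestors are also extended by $K+1$ tokens. The per-request throughput is $(K+1)/((cK+1)T_{\p})$.

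\textbf{Step 4 (conclusion).} Dividing the Step 3 throughput by the baseline $1/T_{\p}$ gives $S^{(\text{mem})}=(K+1)/(cK+1)$. Because $BN$ cancels everywhere, the speed-up is independent of the effective batch size. Substituting $c\approx\rho$ recovers \eqref{eq:speedup}.

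I expect the main obstacle to be justifying that the target and draft passes are priced at exactly $T_{\p}$ and $cT_{\p}$, regardless of token count, below the ridge point. This is precisely the roofline idealization, together with the assumption that KV-cache and attention traffic are negligible compared with weight loads. I would state this explicitly as a modeling assumption rather than attempt to derive it.
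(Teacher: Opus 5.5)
Your proposal is correct and follows the same argument as the paper. Both price one round at $T_{\p}(cK+1)$ ($K$ draft passes at $cT_{\p}$ each, plus a single weight-load for the target pass), credit it with $K{+}1$ tokens per request, and divide by the autoregressive cost of $T_{\p}$ per token. Your version is more explicit than the paper's: you use the roofline $\max(\cdot,\cdot)$ to show that both the $BN(K{+}1)$-token target pass and the $BN$-token draft passes lie below the ridge point, and you state the neglect of KV-cache and attention traffic as a modeling assumption.
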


\begin{proof}
When the target forward pass is memory bandwidth-bound, loading the model weights dominates and the verification cost is approximately constant regardless of $BN$.
The cycle time is
\begin{equation}\label{eq:app-cycle-time}
    t_{\text{SMC-SD step}}^{(\text{mem})} = T_{\p}\bigl(cK + 1\bigr).
\end{equation}
Since autoregressive decoding takes $T_{\p}K$ seconds for $K$ tokens and each SMC-SD cycle produces $K{+}1$ tokens per request, the speed-up is $S^{(\text{mem})} = (K{+}1)/(cK+1)$.
Adding more particles or requests is ``free'' because the target weights are loaded only once regardless of $BN$.
\end{proof}

\begin{lemma}[Compute-bound speed-up]\label{lem:speedup-comp}
If the target forward pass is compute-bound (i.e.\ $BN(K{+}1) > R$), the SMC-SD speed-up over autoregressive decoding is
\begin{equation}\label{eq:speedup-compute}
    S^{(\text{comp})} = \frac{K+1}{\hat{c}(BN)\,K + BN(K{+}1)/R},
\end{equation}
where $\hat{c}(BN) = \max(c,\, c \cdot BN/R)$ is the effective draft cost ratio and $R$ is the hardware ridge-point batch size in tokens.
\end{lemma}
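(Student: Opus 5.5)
We compute the wall-clock time of one SMC-SD cycle under the roofline model and divide the number of tokens it produces per request by the autoregressive time for the same number of tokens. This mirrors the proof of \Cref{lem:speedup-mem}.

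\textbf{Autoregressive baseline.} Normalize so that one memory-bound target decoding step costs $T_{\p}$. Autoregressive decoding then produces one token per request in time $T_{\p}$, so $K{+}1$ tokens take $(K{+}1)T_{\p}$. We use $K{+}1$ tokens in the numerator so that the formula reduces to $S^{(\text{mem})}$ at the ridge point.

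\textbf{Target (verification) pass.} Under the roofline model, a forward pass on $M$ tokens costs $T_{\p}\max(1, M/R)$. When $M\le R$, weight loading dominates. Once $M>R$, the pass is compute-bound with time proportional to FLOPs, which equal $2\mathcal{P}M$ by \Cref{lem:ai-target}, and this is calibrated so the two expressions agree at $M=R$. With $M = BN(K{+}1) > R$, the verification pass therefore costs $T_{\p}\,BN(K{+}1)/R$.

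\textbf{Draft passes.} There are $K$ sequential draft passes, each on $BN$ tokens. The draft model's ridge point sits at the same token count $R$, since $R$ depends only on hardware and quantization. Each pass therefore costs $cT_{\p}\max(1, BN/R) = \hat{c}(BN)T_{\p}$, and $K$ of them cost $\hat{c}(BN)K T_{\p}$. Resampling is neglected, as stated in the setup of this appendix.

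\textbf{Combining.} The cycle time is $T_{\p}\bigl(\hat{c}(BN)K + BN(K{+}1)/R\bigr)$, and each request gains $K{+}1$ tokens per cycle. Dividing $(K{+}1)T_{\p}$ by the cycle time cancels $T_{\p}$ and gives \eqref{eq:speedup-compute}.

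\textbf{Main obstacle.} The delicate step is justifying the compute-bound cost model. We must argue that the per-pass time is linear in token count with slope $T_{\p}/R$, by continuity at the ridge point. We must also argue that the draft model's ridge is the same $R$ in tokens, which is what makes $\hat{c}$ use $BN/R$. I would state both explicitly as roofline modeling assumptions rather than derive them.
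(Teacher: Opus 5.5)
Your proposal is correct and follows essentially the same argument as the paper. Both compute the cycle time as $T_{\p}\bigl(\hat{c}(BN)K + BN(K{+}1)/R\bigr)$, using compute-bound verification and $K$ draft steps against a shared token-count ridge $R$, and divide by one memory-bound $T_{\p}$ per autoregressive token. Your $\max(1, M/R)$ roofline formulation, with calibration at $M=R$, simply states explicitly the modeling assumptions the paper uses informally.
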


\begin{proof}
As $BN(K{+}1)$ grows, the arithmetic intensity of the target pass eventually exceeds the hardware ridge point, and the forward pass becomes compute-bound.
In this regime, the target verification time scales linearly with the number of tokens processed.
Let $T_{\p}^{(1)}$ denote the (memory-bound) time to decode a single token with the target model.
The compute-bound verification of $BN(K{+}1)$ tokens takes approximately $T_{\p}^{(1)} \cdot BN(K{+}1) / R$ seconds, where $R = \text{peak FLOPs}\,/\,\text{peak bandwidth} \cdot b/2$ tokens (with $b$ bytes per parameter).
The draft model, which processes $BN$ tokens in parallel at each of its $K$ steps, may itself become compute-bound if $BN > R$.
In the compute-bound regime, the draft model's per-step cost scales as $c \cdot BN/R$ relative to $T_{\p}^{(1)}$, since the draft model has $c \approx \rho$ times the cost of the target per token.
Writing $\hat{c}(BN) = \max(c,\, c \cdot BN/R)$ for the effective draft cost ratio, the cycle time becomes
\begin{equation}\label{eq:cycle-time-compute}
    t_{\text{SMC-SD step}}^{(\text{comp})} = T_{\p}^{(1)}\!\left(\hat{c}(BN)\,K + \frac{BN(K{+}1)}{R}\right).
\end{equation}
Since autoregressive decoding takes $T_{\p}^{(1)}$ per token, the speed-up is $S^{(\text{comp})} = (K{+}1)/(\hat{c}(BN)K + BN(K{+}1)/R)$.
\end{proof}

\section{Speed-up Under Declining Acceptance Rate}
\label{app:temp-sensitivity}

Standard SD's throughput is tightly coupled to the draft--target acceptance rate, which drops as sampling temperature increases (or more generally, as the draft and target distributions diverge).
SMC-SD, by contrast, always outputs $K{+}1$ tokens per round regardless of acceptance, so its throughput is largely temperature-invariant.
\Cref{fig:speedup-vs-temp} plots the SMC-SD\,/\,SD throughput ratio across temperatures: the speed-up grows from ${\sim}1.5\times$ at $T{=}0.2$ to ${\sim}3\times$ at $T{=}1.0$.
On average across temperatures, SMC-SD sequences incur only a ${\sim}5\%$ increase in negative log-likelihood under the target model compared to exact SD, confirming that the approximation quality remains high even as the throughput advantage widens.

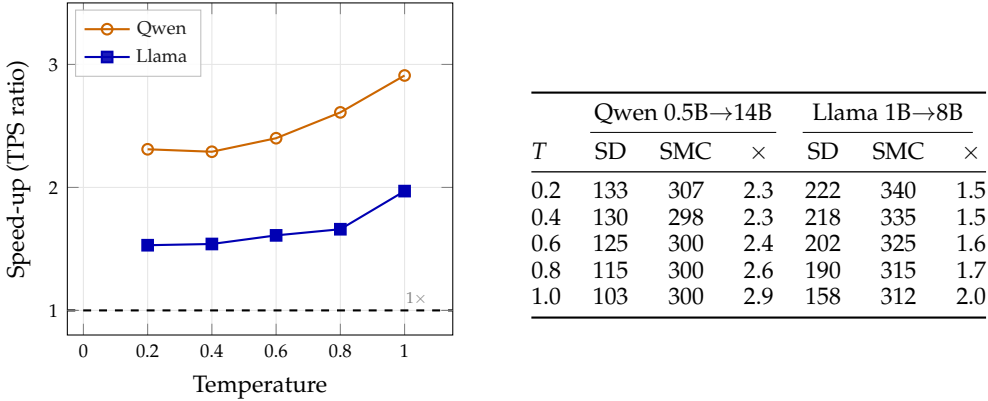
\begin{figure}[H]
\centering
\begin{minipage}[c]{0.48\textwidth}
\centering
\begin{tikzpicture}
\begin{axis}[
    width=\textwidth, height=6cm,
    xlabel={Temperature},
    ylabel={Speed-up (TPS ratio)},
    xmin=-0.05, xmax=1.15,
    ymin=0.8, ymax=3.5,
    xtick={0.0, 0.2, 0.4, 0.6, 0.8, 1.0},
    grid=major,
    grid style={gray!20},
    legend style={font=\scriptsize, at={(0.02,0.98)}, anchor=north west,
                  draw=gray!50, fill=white, fill opacity=0.9},
    every axis label/.style={font=\small},
    tick label style={font=\scriptsize},
    extra y ticks={1},
    extra y tick labels={},
    extra y tick style={grid style={black, dashed, thick}},
]
\addplot[thick, mark=o, mark size=2pt, orange!80!black]
    coordinates {(0.2,2.31)(0.4,2.29)(0.6,2.40)(0.8,2.61)(1.0,2.91)};
\addlegendentry{Qwen}
\addplot[thick, mark=square*, mark size=2pt, blue!70!black]
    coordinates {(0.2,1.53)(0.4,1.54)(0.6,1.61)(0.8,1.66)(1.0,1.97)};
\addlegendentry{Llama}
\node[font=\tiny, anchor=east, gray] at (axis cs:1.1, 1.1) {$1\times$};
\end{axis}
\end{tikzpicture}
\end{minipage}%
\hfill
\begin{minipage}[c]{0.50\textwidth}
\centering
\small
\begin{tabular}{@{}l cc c cc c@{}}
\toprule
& \multicolumn{3}{c}{Qwen 0.5B$\rightarrow$14B} & \multicolumn{3}{c}{Llama 1B$\rightarrow$8B} \\
\cmidrule(lr){2-4} \cmidrule(lr){5-7}
$T$ & SD & SMC & $\!\times\!$ & SD & SMC & $\!\times\!$ \\
\midrule
0.2 & 133 & 307 & 2.3 & 222 & 340 & 1.5 \\
0.4 & 130 & 298 & 2.3 & 218 & 335 & 1.5 \\
0.6 & 125 & 300 & 2.4 & 202 & 325 & 1.6 \\
0.8 & 115 & 300 & 2.6 & 190 & 315 & 1.7 \\
1.0 & 103 & 300 & 2.9 & 158 & 312 & 2.0 \\
\bottomrule
\end{tabular}
\label{tab:temp-tps}
\end{minipage}
\caption{SMC-SD\,/\,SD throughput ratio vs.\ temperature (left) and raw throughput in tokens/s (right).
SMC-SD maintains near-constant TPS while SD degrades as acceptance rates fall, widening the speed-up from ${\sim}1.5\times$ at $T{=}0.2$ to ${\sim}3\times$ at $T{=}1.0$.
On average, SMC-SD sequences incur only a ${\sim}5\%$ increase in NLL under the target model.}
\label{fig:speedup-vs-temp}
\end{figure}

\section{Power Sampling}
\label{app:power-sampling}
One advantage of SMC is that it allows us to sample approximately from un-normalized target distributions whose normalizing constants are intractable. One such target distribution is the \emph{power distribution} \citep{karan2025reasoningsamplingbasemodel}, given by
$\p_{\textrm{pow}}(\srcStr)\propto \p(\srcStr)^\alpha$
where $\srcStr \in \srcStrings$. In contrast to temperature sampling, which tempers the distribution of each \emph{token}, power sampling tempers the distribution over entire \emph{strings}. Applying SMC-SD to sample approximately from this distribution is simply a matter of weighting the draft model $\dist$'s generated tokens by
\begin{equation}
    \Weight_t\psup{n}=\frac{\prefixp(\srcSym_t\psup{n} \mid \srcStr\psup{n}_{<t})^\alpha}{\prefixq(\srcSym_t\psup{n} \mid \srcStr\psup{n}_{<t})}.
\end{equation}
Prior work has proposed using SMC to sample approximately from the power distribution. PowerSMC \citep{azizi2026powersmclowlatencysequencelevelpower} proposes generating particles via temperature sampling
\begin{equation}
    \srcSym_t\psup{n}\sim \frac{\prefixp(\cdot \mid \srcStr\psup{n}_{<t})^\alpha}{\sum_{\srcSym \in \srcAlphabet} \prefixp(\srcSym \mid \srcStr\psup{n}_{<t})^\alpha},
\end{equation}
weighting with importance weight equal to
\begin{equation}
    \Weight_t\psup{n}=\sum_{\srcSym \in \srcAlphabet} \prefixp(\srcSym \mid \srcStr\psup{n}_{<t})^{\alpha},
\end{equation}
and resampling after each batch of tokens is generated (if the ESS is less than a cutoff). The PowerSMC procedure can be expressed as a special case of the SMC-SD framework where $K=1$, the bonus token is removed, and the target and draft models are the same. The authors prove that temperature sampling from the target model provides the locally variance-minimizing proposal for sampling from the power distribution.  \Cref{fig:power_sampling} shows that sampling from the power distribution of an 8B Llama model with a 1B Llama draft model yields a 10\% higher Pass@1 accuracy over the baseline target model. Accuracy for Pass@$k$ for $k>1$ is greater than or equal to the baseline target model as well.

\begin{figure}[H]
    \centering
    \includegraphics[width=1\linewidth]{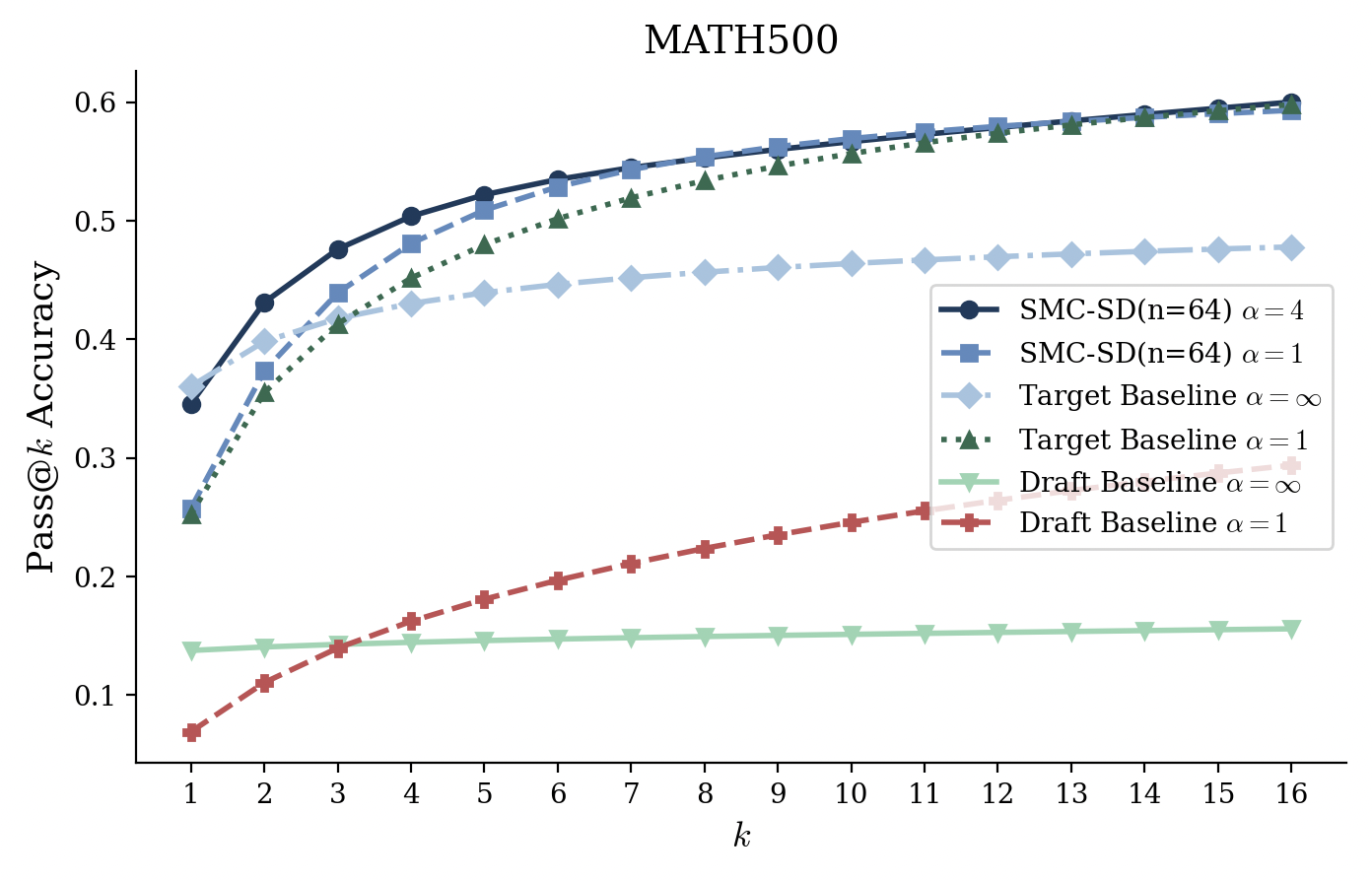}
    \caption{Power Sampling helps improve Pass@$k$ quality}
    \label{fig:power_sampling}
\end{figure}

\section{More Qualitative Examples}
\label{app:qualitative}

\paragraph{GSM8K.}
\begin{description}[leftmargin=2.5em, style=nextline, font=\normalfont]
\item[Llama 1B$\rightarrow$8B]
\textbf{Question:} Claire makes a 3 egg omelet every morning for breakfast. How many dozens of eggs will she eat in 4 weeks?

\textbf{SMC-SD:} Step 1: Convert 4 weeks to days --- 4 weeks $\times$ 7 days/week $=$ 28 days. Step 2: Calculate the number of eggs in 28 days --- 28 days $\times$ 3 eggs/day $=$ 84 eggs. Step 3: Convert the number of eggs from eggs to dozens --- 84 eggs / 12 eggs/dozen $=$ 7 dozens. The final answer is: 7.

\item[Qwen 3B$\rightarrow$14B]
\textbf{Question:} A new program had 60 downloads in the first month. The number of downloads in the second month was three times as many as the downloads in the first month, but then reduced by 30\% in the third month. How many downloads did the program have total over the three months?

\textbf{SMC-SD:} First month: 60 downloads. Second month: $3 \times 60 = 180$ downloads. Third month: reduced by 30\%, so $180 \times 0.70 = 126$ downloads. Total: $60 + 180 + 126 = 366$.
\end{description}

\paragraph{MATH500.}
\begin{description}[leftmargin=2.5em, style=nextline, font=\normalfont]
\item[Llama 1B$\rightarrow$8B --- Counting \& Probability, Level 5]
\textbf{Problem:} Pat is to select six cookies from a tray containing only chocolate chip, oatmeal, and peanut butter cookies. There are at least six of each of these three kinds of cookies on the tray. How many different assortments of six cookies can be selected?

\textbf{SMC-SD:} We can use combinations with repetition. Distributing 6 identical items into 3 distinct groups: $C(n + k - 1, k) = C(3 + 6 - 1, 6) = C(8, 6) = 8!/(6! \times 2!) = 28$.

\item[Qwen 3B$\rightarrow$14B --- Algebra, Level 4]
\textbf{Problem:} Evaluate $\lceil(3.6)^2\rceil - (\lceil 3.6\rceil)^2$.

\textbf{SMC-SD:} $(3.6)^2 = 12.96$, $\lceil 12.96\rceil = 13$, $\lceil 3.6\rceil = 4$, $4^2 = 16$, $13 - 16 = -3$.
\end{description}

\paragraph{AlpacaEval.}
\begin{description}[leftmargin=2.5em, style=nextline, font=\normalfont]
\item[Llama 1B$\rightarrow$8B --- Geography]
\textbf{Instruction:} Where is Indonesia?

\textbf{SMC-SD:} Indonesia is an archipelago of over 17,000 islands located in Southeast Asia. It is situated in the Pacific Ocean and is the world's largest archipelago, both in terms of the number of islands and land area. Indonesia is bordered by the Pacific Ocean to the east, the Indian Ocean to the south, and the South China Sea to the west. It also borders Malaysia and Brunei to the north, and the Philippines to the east. Indonesia is the world''s fourth most populous country, with over 273 million people, and its capital and largest city is Jakarta.

\item[Llama 1B$\rightarrow$8B --- Travel math]
\textbf{Instruction:} If it takes 8 hours to fly to Singapore, stop over for an hour and a half and then fly 14 hours to London, what is the total travel time if the plane is delayed in Singapore?

\textbf{SMC-SD:} To calculate the total travel time, let's break down the journey: Flight from the USA to Singapore: 8 hours. Stopover in Singapore: 1.5 hours. Flight from Singapore to London: 14 hours. Total travel time: $8 + 1.5 + 14 = 23.5$ hours. However, you mentioned the plane is delayed in Singapore. Let's assume the delay is 1 hour\ldots{} Total with delay: 24.5 hours.

\item[Llama 1B$\rightarrow$8B --- Announcement tweet]
\textbf{Instruction:} Write an announcement tweet for medium.com readers about the new blogpost on ``Open Assistant is open source ChatGPT that you don't wanna miss out''.

\textbf{SMC-SD:} ``EXCLUSIVE Launch Day Alert! Don't miss out on the surprise of the year! Our brand new blog post is LIVE on Medium: `Open Assistant is open source ChatGPT that you don't wanna miss out'. Read now and discover the future of conversational AI! [link] \#OpenAssistant \#ChatGPT \#AI''

\item[Qwen 3B$\rightarrow$14B --- Argentina]
\textbf{Instruction:} What language do Argentina people speak?

\textbf{SMC-SD:} The Argentinian people speak Spanish, which is the official language of Argentina. However, in some regions, there may be smaller dialects or local languages that are spoken, but they are not as widely used or respected as Spanish.

\item[Qwen 3B$\rightarrow$14B --- Assess credibility]
\textbf{Instruction:} How can you assess the credibility of a source of information, such as a news article or blog post, without relying solely on the reputation of the author or publisher?

\textbf{SMC-SD:} Assessing the credibility of a source of information, including a news article or blog post, is a matter of taking a few key steps to understand the reliability and quality of the information presented. Here''s a checklist of steps that can be followed to evaluate the credibility of the source:
\begin{itemize}[nosep]
\item \textbf{Check the Author and Publisher's Background:} Look for the author's credentials, expertise, and the publication's reputation. A reputable author or publisher is less likely to have a history of making false claims or spreading misinformation.
\item \textbf{Review the Bias and Methodology:} Assess the sources of information used in the article or blog post. Check if the data or other sources are clearly stated and verified. Look for indications of bias---bias can come from multiple sources, leading to an incomplete or potentially misleading narrative.
\item \textbf{Check for Reproducibility of the Data:} Determine whether the data used in the article or blog post can be reproduced. This is crucial to ensure that the information presented is verifiable. If the data could be easily modified, then the information is less credible.
\item \textbf{Evaluate the Relevance to the Topic:} The article should be based on issues of relevance and current importance. The information should address a specific problem or discuss an issue of public interest. The author must also support their claims with facts and evidence.
\item \textbf{Check for Peer Reactions and Expert Consensus:} The credibility of the source also depends on the extent to which other experts in the field agree with the information. Support from respected figures can significantly bolster the credibility of a source.
\item \textbf{Use External References and Other Sources:} Look for credible references that support the claims made in the article or blog post. Cross-referencing can help verify the accuracy of the information presented.
\item \textbf{Look for a Balanced View:} Ensure that the article or blog post provides a fair and balanced view of the topic. Avoid articles that are overly negative or sensationalistic.
\item \textbf{Check for Up to Date Information:} Keep up with the latest developments in the field or topic being discussed. This can often provide the most reliable and up-to-date information.
\end{itemize}
By following these steps, you can effectively assess the credibility of a source of information, ensuring that you obtain accurate and trustworthy information.

\item[Qwen 3B$\rightarrow$14B --- Music theory subtopics]
\textbf{Instruction:} Create a list of subtopics for the given topic: Music theory.

\textbf{SMC-SD:}
\begin{itemize}[nosep]
\item Fundamental Concepts in Music Theory
\item Rhythm and Timing in Music Theory
\item Tonality and Key Systems in Music Theory
\item Chord and Harmony in Music Theory
\item Ear Training and Musical Sensation in Music Theory
\item Music Instrumentation and Improvisation in Music Theory
\item Music Analysis and Composition in Music Theory
\item Genre and Style in Music Theory
\item Historical Analysis of Music Theory
\item Theory of Scale and Melody in Music Theory
\end{itemize}
\end{description}

\paragraph{DS1000.}
\begin{description}[leftmargin=2.5em, style=nextline, font=\normalfont]
\item[Llama 1B$\rightarrow$8B --- Matplotlib, Problem 511]
\textbf{Prompt:} Given arrays \texttt{x} and \texttt{y}, plot \texttt{x} vs.\ \texttt{y} and label the curve using \texttt{"x-y"} in the legend.

\textbf{SMC-SD:}
\begin{verbatim}
import numpy as np
import matplotlib.pyplot as plt

# Define the arrays
x = np.array([1, 2, 3, 4, 5, 6, 7, 8, 9, 10])
y = np.array([10, 9, 8, 7, 6, 5, 4, 3, 2, 1])

# Create the plot
plt.plot(x, y, label='x-y')
plt.xlabel('x')
plt.ylabel('y')
plt.title('Plot of x vs. y')
plt.legend()

# Show the plot
plt.show()
\end{verbatim}

\item[Qwen 3B$\rightarrow$14B --- NumPy, Problem 292]
\textbf{Prompt:} I want to figure out how to remove \texttt{nan} values from my array. For example, my array looks something like \texttt{x = [1400, 1500, 1600, nan, nan, nan, 1700]}. How can I remove the \texttt{nan} values from \texttt{x} to get something like \texttt{[1400, 1500, 1600, 1700]}?

\textbf{SMC-SD:}
\begin{verbatim}
import numpy as np

# Given array with nan values
x = np.array([1400, 1500, 1600, np.nan, np.nan, np.nan, 1700])

# Filter out nan values
x_clean = x[~np.isnan(x)]

print(x_clean)
\end{verbatim}
\end{description}

\end{document}